\theoremstyle{plain}
\newtheorem{theorem}{Theorem}
\newtheorem{prop}{Proposition}
\theoremstyle{plain}
\begin{document}

%\begin{frontmatter}

\title{Balanced Q-learning: Combining the Influence of Optimistic and Pessimistic
Targets}
%\tnotetext[mytitlenote]{Fully documented templates are available in the elsarticle package on \href{http://www.ctan.org/tex-archive/macros/latex/contrib/elsarticle}{CTAN}.}

%% Group authors per affiliation:
\author{Thommen George Karimpanal}
%\ead{support@elsevier.com}
%\author{Thommen George Karimpanal}\ead{thommen.karimpanalgeorge@deakin.edu.au}
%\corref{mycorrespondingauthor}
%\cortext[mycorrespondingauthor]{Corresponding author}
\author{Hung Le}
\author{Majid Abdolshah}
\author{Santu Rana}
\author{Sunil Gupta}
\author{Truyen Tran}
\author{Svetha Venkatesh}
\affil{Applied Artificial Intelligence Institute, Deakin University,\\ 75 Pigdons Road, Waurn Ponds, Geelong, VIC 3216, Australia}

\maketitle
%% or include affiliations in footnotes:

%\address[mymainaddress]{1600 John F Kennedy Boulevard, Philadelphia}

\begin{abstract}
The optimistic nature of the $Q-$learning target leads to an overestimation
bias, which is an inherent problem associated with standard $Q-$learning. Such a bias fails to account for the possibility of low returns, particularly in risky scenarios.
However, the existence of biases, whether overestimation or underestimation,
need not necessarily be undesirable. In this paper, we analytically
examine the utility of biased learning, and show that specific types
of biases may be preferable, depending on the scenario. Based on this
finding, we design a novel reinforcement learning algorithm, \emph{Balanced Q-learning},
in which the target is modified to be a convex combination of a pessimistic
and an optimistic term, whose associated weights are determined online,
analytically. We prove the convergence of this algorithm in a tabular
setting, and empirically demonstrate its superior learning performance
in various environments.%, where it either matches or outperforms other competing approaches.
\end{abstract}

%\linenumbers

\section{Introduction\label{sec:intro}}

$Q-$learning \cite{watkins1989learningfrom} is one of the most
popular algorithms used in reinforcement learning \cite{sutton1998reinforcement}.
The update rule is simple and intuitive, wherein at each step, action
values are updated towards a target, comprised of the reward received
during the current step and a discounted estimate of the \emph{maximum}
return achievable from the next step onwards. This dependency on the
estimated maximum returns makes the $Q-$learning target overly optimistic
with regards to the expected future rewards. As a consequence, the resulting policy fails to account for potentially risky actions that could result in low rewards. This is also closely tied
to the problem of overestimation \cite{thrun1993issues}, which can
cause the policy to significantly deviate from the optimal one \cite{hasselt2010double,van2016deep}
during learning.

In recent years, several attempts \cite{hasselt2010double,van2016deep,lan2019maxmin,anschel2017averaged}
have been made to control the extent of this overestimation bias.
Some of these solutions, while eliminating the overestimation bias,
introduce underestimation, as is the case in double $Q-$learning
\cite{hasselt2010double} and double DQN \cite{van2016deep}. It
has previously been suggested by Lan et al. \cite{lan2019maxmin}
that although both types of biases are generally undesirable, they
are not strictly detrimental, and in certain scenarios, they can aid
learning. Specifically, in areas associated with high stochasticity,
overestimation is beneficial if those regions are associated with
a high value (low risk regions), as this encourages exploration into these regions. Conversely,
in highly stochastic low value regions (high risk regions), underestimation discourages
exploration into these regions, which can be beneficial for learning. This highlights
the need for a mechanism that can automatically control the degree
of overestimation/underestimation online during learning.

 %For example, the agent in Figure  is likely to benefit from overestimating `right' actions, as they lead to desirable consequences (apples) with a high probability. Such overestimation can be achieved through increased reliance on the optimistic maximization target. Similarly, `left' actions, which likely lead to undesirable consequences (fire) are better off being underestimated through increased reliance on the pessimistic minimization target.

We draw from this intuition and develop an approach to automatically
control the agent's tendency to optimistically or pessimistically
estimate the learning target, thereby also controlling the level of
overestimation, through a novel algorithm, \emph{Balanced $Q-$learning}.
We consider the current action value estimate, and the difference
between the learning target and optimal value, and derive a temporal
relation between these quantities as the learning progresses. This
relation analytically confirms the aforementioned intuitions of Lan
et al. \cite{lan2019maxmin}, and provides an idea of the type of
scenarios that are likely to benefit from overestimation and underestimation.
The derived relation also forms the basis for the design of the balanced
$Q-$learning algorithm, in which we replace the maximization term
of the standard $Q-$learning target with a convex combination of
an optimistic maximization term and a pessimistic minimization term,
controlling the extent of influence of each term using a balancing
factor $\beta$. %The balancing factor also directly controls the degree of overestimation associated with the learning target. 
We derive an online update rule for $\beta$, and prove the convergence
of the resulting algorithm in tabular settings. We then compare the
empirical performance of balanced $Q-$learning with competing approaches
in a number of benchmark environments. In summary, our main contributions
are: 
\begin{itemize}
\item A risk-aware framework, balanced $Q-$learning to adaptively balance the extent of optimism and pessimism in each step, thereby accelerating learning. 
\item Derivation of a temporal relation between the action value estimate
and the target difference during learning. 
\item An empirical comparison of the performance of balanced $Q-$learning
with existing approaches in a variety of benchmark environments. 
\end{itemize}

\section{Balanced Targets\label{sec:Exploiting-biases-1}}

We consider an MDP (Markov Decision Process) \cite{Puterman:1994:MDP:528623}
setting, $(\mathcal{S,A,T,R})$, where $\mathcal{S}$ is the state-space,
$\mathcal{A}$ is the action-space, $\mathcal{T}$ represents the
transition probabilities, and $\mathcal{R}$ is the reward function.
For a state $s\in\mathcal{S}$ and action $a\in\mathcal{A}$, the
$(n+1)^{th}$ estimate of the action value $Q_{n+1}(s,a)$ is given
by the temporal difference (TD) update rule:
\begin{equation}
Q_{n+1}(s,a)\leftarrow Q_{n}(s,a)+\alpha\left[Q_{T_{n}}-Q_{n}(s,a)\right],\label{eq:gen_TD}
\end{equation}
where $Q_{T_{n}}$ is the target for the TD update. 

In standard $Q-$learning and DQN, the learning target $Q_{T_{n}}$
is given by: $Q_{T_{n}}=r(s,a)+\gamma\underset{a'}{max}\thinspace Q_{n}(s',a')$, where $\gamma$ is the discount factor.
This implies that the target is dependent on the reward $r(s,a)$,
and on an estimate of the sum of the future rewards $\underset{a'}{max\thinspace}Q_{n}(s',a')$
that would be obtained from state $s'$. The \emph{max} operator encodes
the optimistic nature of $Q-$learning, which only considers the best
case scenario, where the maximum return would be obtained. This operator
also introduces an overestimation bias in the presence of stochastic
transition and/or reward functions \cite{hasselt2010double}, particularly
with the use of function approximators, as they introduce approximation
errors, which make it more likely for the \emph{max} operator to incorrectly
overvalue certain actions. Such an overvaluing of actions could lead to low returns, especially if the agent operates in adverse environments where several actions are considered to be undesirable/associated with low or negative rewards.%., the problem is further exacerbated due to the presence of approximation errors. 

Following conventions by Thrun and Schwartz \cite{thrun1993issues},
in general, if $Q_{est}$ and $Q_{true}$ denote the estimated and
true $Q-$values, the \emph{max} operator leads to an estimation bias
$Z$: %, such that the noisy estimate  is related to the true value  as , then the difference in the estimated target  and true target  would give the estimation bias :
\[
Z=\gamma(\underset{a'}{max\thinspace}Q_{est}(s',a')-\underset{a'}{max\thinspace}Q_{true}(s',a'))
\]

The key intuition is that due to the \emph{max} operator, the expected
bias $\mathbb{E}(Z)$ is positive \cite{thrun1993issues}, which
implies an overestimated target. On the other hand, if we replace
the \emph{max} operator with the other extreme, the TD target would
be $Q_{T_{n}}=r(s,a)+\gamma\underset{a'}{\thinspace min}Q_{n}(s',a')$.
Such a target is based on a pessimistic estimate of the future sum
of rewards. The corresponding estimation bias can be shown to be:
\[
Z=\gamma(\underset{a'}{min\thinspace}Q_{est}(s',a')-\underset{a'}{min\thinspace}Q_{true}(s',a'))
\]

Here, the expected bias $\mathbb{E}(Z)$ would be negative. The intuition
behind balanced $Q-$learning is to appropriately weight these optimistic
and pessimistic targets to contextually promote the right type of
biases during learning. A simple linear combination of these targets
would result in the balanced TD target:
\begin{equation}
Q_{T_{n}}=r(s,a)+\gamma\left[\beta(s,a)\underset{a'}{max\thinspace}Q_{n}(s',a')+(1-\beta(s,a))\underset{a'}{min\thinspace}Q_{n}(s',a')\right]\label{eq:balancedQ}
\end{equation}
where $\beta(s,a)$\footnote{$\beta(s,a)$ is henceforth denoted as $\beta$ for brevity.}
($0\leq\beta(s,a)\leq1$) is the balancing factor, whose value is
determined online. The estimation bias for such a formulation would
be:
\[
Z=\gamma\beta(\underset{a'}{max\thinspace}Q_{est}(s',a')-\underset{a'}{max\thinspace}Q_{true}(s',a'))\]\[+\gamma(1-\beta)(\underset{a'}{min\thinspace}Q_{est}(s',a')-\underset{a'}{min\thinspace}Q_{true}(s',a'))
\]

As per the above equation, $\mathbb{E}(Z)$ can vary across the full
range by varying $\beta$ from $1$ (maximum overestimation) to $0$
(maximum underestimation). While our argument here was based on the
presence of function approximation noise, the approach can also be
used to alleviate overestimation in stochastic tabular environments,
owing to the fact that allowing $\beta<1$ during learning can diminish
the extent of overestimation that would otherwise be caused by the
\emph{max} operator.

\section{Desirability of Biases\label{sec:Exploiting-biases}}

As per Equation \ref{eq:gen_TD}, in order for the action value to
move towards the optimum value $Q^{*}(s,a)$, $Q_{T_{n}}$ must be
approximated to be as close to $Q^{*}(s,a)$ as possible. We refer
to $Q_{T_{n}}-Q^{*}(s,a)$ as the \textit{target difference} $t_{n}(s,a)$:
\begin{equation}
t_{n}(s,a)=Q_{T_{n}}-Q^{*}(s,a)\label{eq:target_diff}
\end{equation}

As $Q^{*}(s,a)$ is unknown, it is not possible to accurately determine
$t_{n}(s,a)$. However, as explained in the subsequent paragraphs,
a temporal relation between the target difference and the difference
between the optimal and estimated value functions can inform the nature
of biases (overestimation or underestimation) that are preferred in
different scenarios. This temporal relation is shown in Equation \ref{eq:biasperformance}
of Theorem \ref{thm:theorem1}. 
\begin{theorem}
In a finite MDP $\mathcal{M={S,A,T,R}}$, for a given
state-action pair $(s,a)$, the difference between the optimal $Q$-function
$Q^{*}(s,a)$ and the action value estimate $Q_{n+m}(s,a)$ after
$m$ updates is given by\label{thm:theorem1}:

\begin{equation}
Q^{*}(s,a)-Q_{n+m}(s,a)=(1-\alpha)^{m}\left[Q^{*}(s,a)-Q_{n}(s,a)\right]-\alpha\sum_{i=1}^{m}(1-\alpha)^{i-1}t_{n+m-i}(s,a)\label{eq:biasperformance}
\end{equation}
\label{sec:Theory}where $Q_{n}(s,a)$ is the estimate of the value
function at the $n^{th}$ update, $Q^{*}(s,a)$ is the optimal value
function, $\alpha$ is the step size, and $t(s,a)$ is the target
difference. 
\end{theorem}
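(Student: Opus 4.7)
The plan is to prove the identity by induction on $m$, unrolling the TD update rule from Equation \ref{eq:gen_TD} and substituting the definition of the target difference $t_n(s,a)$ from Equation \ref{eq:target_diff} at each step.

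For the base case $m=1$, I would start from $Q_{n+1}(s,a) = Q_n(s,a) + \alpha[Q_{T_n} - Q_n(s,a)]$, subtract both sides from $Q^*(s,a)$, and use the substitution $Q_{T_n} = Q^*(s,a) + t_n(s,a)$ to obtain
\begin{equation*}
Q^*(s,a) - Q_{n+1}(s,a) = (1-\alpha)\bigl[Q^*(s,a) - Q_n(s,a)\bigr] - \alpha\, t_n(s,a),
\end{equation*}
which matches the claimed formula for $m=1$.

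For the inductive step, assume the identity holds for some $m$. I would then apply the one-step relation from the base case with $n$ replaced by $n+m$ to get
\begin{equation*}
Q^*(s,a) - Q_{n+m+1}(s,a) = (1-\alpha)\bigl[Q^*(s,a) - Q_{n+m}(s,a)\bigr] - \alpha\, t_{n+m}(s,a),
\end{equation*}
substitute the inductive hypothesis for $Q^*(s,a) - Q_{n+m}(s,a)$, and distribute the factor $(1-\alpha)$ into the sum. The leading coefficient becomes $(1-\alpha)^{m+1}$, and the geometric weights in the sum each pick up an extra factor of $(1-\alpha)$. Finally, I would reindex via $j = i+1$ to absorb the standalone $\alpha\, t_{n+m}(s,a)$ term as the new $j=1$ contribution, producing the sum $\alpha\sum_{j=1}^{m+1}(1-\alpha)^{j-1}\,t_{n+m+1-j}(s,a)$ as required.

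The only real obstacle is the bookkeeping during the index shift in the sum: one has to verify that combining the $\alpha\, t_{n+m}$ term with the reindexed sum yields exactly the form $\alpha\sum_{j=1}^{m+1}(1-\alpha)^{j-1}\,t_{(n+m+1)-j}(s,a)$ with the correct upper limit and the correct temporal offset. Beyond that, the argument is purely algebraic and does not require any properties of $Q_{T_n}$ beyond those already encoded in the definition of $t_n(s,a)$, so the statement holds independently of whether $Q_{T_n}$ is the standard, pessimistic, or balanced target from Equation \ref{eq:balancedQ}.
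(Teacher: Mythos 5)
Your proposal is correct and follows essentially the same route as the paper: an induction on $m$ whose base case rewrites the one-step TD update using $Q_{T_n}=Q^{*}(s,a)+t_{n}(s,a)$, and whose inductive step applies that one-step relation at index $n+m$, substitutes the hypothesis, distributes $(1-\alpha)$, and reindexes the sum. Your closing remark that the identity holds for any target (since only the definition of $t_{n}$ is used) is also consistent with how the paper treats $Q_{T_n}$ generically.
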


\begin{proof}
(Proof of Theorem \ref{thm:theorem1} is provided in Appendix \ref{subsec:Proof-of-Theorem}) 
\end{proof}
\emph{High-value (low-risk) regions:} In Equation \ref{eq:biasperformance},
if $Q_{n}(s,a)$ is a random initial (i.e., at $n=1$) estimate of
the $Q-$function, in a stochastic high-value region of the state-action
space, it is likely that the term $Q^{*}(s,a)-Q_{n}(s,a)$ is positive.
Hence, in order for the left hand side of Equation \ref{eq:biasperformance}
to be close to $0$, it is preferable for the term $\alpha\sum_{i=1}^{m}(1-\alpha)^{i-1}t_{n+m-i}(s,a)$,
to be positive. From Equation \ref{eq:target_diff}, it is clear that
this can be achieved by ensuring the learning target generally exceeds
$Q^{*}(s,a)$, particularly with respect to the most recent updates.
Such a scenario can possibly be induced through overestimation of
the target, specifically, by using higher values of $\beta$ in Equation
\ref{eq:balancedQ}.

\emph{Low-value (high-risk) regions:} Following a similar rationale, in stochastic
low-value regions, given a randomly initialized estimate $Q_{n}(s,a)$,
the term $Q^{*}(s,a)-Q_{n}(s,a)$ is likely to be negative. Hence,
the term on the RHS $\alpha\sum_{i=1}^{m}(1-\alpha)^{i-1}t_{n+m-i}(s,a)$ must
be negative in order for $Q_{n+m}(s,a)$ to be as close as possible
to $Q^{*}(s,a)$. The weighted sum of target differences can assume
negative values if the target difference is generally negative, particularly
with respect to the most recent updates. This can be achieved by underestimating
the learning target by using lower values of $\beta$ in Equation
\ref{eq:balancedQ}.

This analytically confirms the intuitions of Lan et al. \cite{lan2019maxmin},
that underestimation is likely to be beneficial in low-value (high risk) regions
of the state-action space, and overestimation is likely to be beneficial
in high-value (low risk) regions.

\section{The Balanced Q-learning Framework\label{sec:Balanced-Q-learning}}

In this section, we develop the overall framework for balanced Q-learning. 

From the discussion following Theorem \ref{thm:theorem1}, we know
that based on the term $Q^{*}(s,a)-Q_{n}(s,a)$ in Equation \ref{eq:biasperformance}
being positive or negative, overestimation or underestimation biases
would be preferred respectively. If $Q^{*}(s,a)$ were known, it would
be possible to artificially modify the target $Q_{T_{n+1}}$ as:
\begin{equation}
Q_{T_{n+1}}'=Q_{T_{n+1}}+\eta[Q^{*}(s,a)-Q_{n}(s,a)]\label{eq:newtarget}
\end{equation}
where $Q_{T_{n+1}}'$ is the modified target and $\eta>0$ is a step
size hyperparameter. This way, a positive value of $Q^{*}(s,a)-Q_{n}(s,a)$
would cause the modified target $Q_{T_{n+1}}'$ to exceed $Q_{T_{n+1}}$,
thereby injecting positive biases into the system. Similarly, in response
to a negative value of $Q^{*}(s,a)-Q_{n}(s,a)$, the target would
be modified to inject negative biases into the system. As $Q^{*}(s,a)$
is unknown, at each step, we use the balanced target in its place,
adaptively injecting the appropriate level of bias by controlling
the value of the balancing factor online during learning. The corresponding
adaptive update rule for updating the balancing factor is given by
Proposition \ref{thm:betarulethm}.

\begin{figure}
\centering{}\includegraphics[width=0.9\columnwidth]{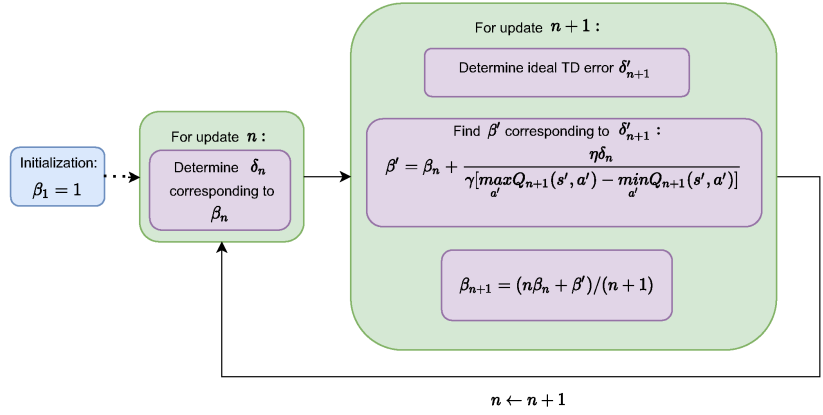}\caption{Block diagram describing the processes involved in Balanced $Q-$learning}
\label{fig:flow} 
\end{figure}

\begin{prop}
\label{thm:betarulethm}For a transition $(s,a,r,s')$ in an MDP $\mathcal{{M}}$,
where the $n^{th}$ update of the action value $Q_{n}(s,a)$ conducted
as per Equation \ref{eq:gen_TD} is associated with a TD error $\delta_{n}$,
and the modified target for the $(n+1)^{th}$ update is computed as
per Equation \ref{eq:newtarget}, with $Q^{*}(s,a)$ being approximated
by the balanced target in Equation \ref{eq:balancedQ}, the equivalent
balancing factor $\beta'$ for the $(n+1)^{th}$ update is given by
\begin{equation}
\beta'=\beta_{n}+\frac{\eta\delta_{n}}{\gamma[\underset{a'}{max\thinspace}Q_{n+1}(s',a')-\underset{a'}{min\thinspace}Q_{n+1}(s',a')]}\label{eq:betavalue-1}
\end{equation}

where $\eta$ is a step-size hyperparameter, $\gamma$ is the discount
factor and $Q_{n+1}$ is the action value estimate corresponding to
the $(n+1)^{th}$ update. %If  is the TD error associated with the  update corresponding to a transition from state  to  by taking action  using some initial value of the balancing factor , the value of the equivalent balancing factor  for the  update is given by: 
\end{prop}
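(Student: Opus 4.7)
The plan is to view the proposition as an algebraic identification problem: we impose that the modified target $Q_{T_{n+1}}'$ defined in Equation~\ref{eq:newtarget} coincides with a balanced target of the form in Equation~\ref{eq:balancedQ} at step $n+1$ with some new balancing factor $\beta'$, and then solve for $\beta'$. Writing $M := \underset{a'}{\max}\, Q_{n+1}(s',a')$ and $m := \underset{a'}{\min}\, Q_{n+1}(s',a')$ for brevity, the unmodified balanced target at step $n+1$ (with the current $\beta_n$ carried over) is $Q_{T_{n+1}} = r + \gamma[\beta_n M + (1-\beta_n)m]$, and the target we wish to realize with a new factor is $r + \gamma[\beta' M + (1-\beta')m]$.

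Next, I would eliminate $Q^*(s,a)$ from Equation~\ref{eq:newtarget} using the approximation stated in the proposition's hypothesis, namely $Q^*(s,a) \approx Q_{T_n}$ (the balanced target at step $n$). Under this substitution, $Q^*(s,a) - Q_n(s,a) \approx Q_{T_n} - Q_n(s,a) = \delta_n$, so the modified target becomes $Q_{T_{n+1}}' = Q_{T_{n+1}} + \eta\delta_n$.

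The rest is short algebra: equating
\[
r + \gamma[\beta' M + (1-\beta')m] \;=\; r + \gamma[\beta_n M + (1-\beta_n)m] + \eta\delta_n,
\]
the $r$ terms cancel, the coefficients of $m$ combine to give $\gamma(\beta' - \beta_n)(M - m)$ on the left-hand side, and solving yields exactly Equation~\ref{eq:betavalue-1}. I expect this rearrangement to be routine; the real conceptual step is step two, where the target itself is used as a surrogate for $Q^*$ so that the correction $\eta[Q^*(s,a) - Q_n(s,a)]$ becomes an expression in terms of the observable TD error $\delta_n$.

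The only obstacle worth flagging is well-posedness: the derivation implicitly divides by $M - m$, so the update is only defined when $\underset{a'}{\max}\, Q_{n+1}(s',a') \neq \underset{a'}{\min}\, Q_{n+1}(s',a')$. In the degenerate case where all successor action values coincide, the max- and min-targets collapse to the same value and no choice of $\beta'$ can realize an additive correction $\eta\delta_n \neq 0$; this boundary situation should be acknowledged (e.g., by leaving $\beta$ unchanged) but does not affect the generic update rule above.
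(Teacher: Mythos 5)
Your proposal is correct and follows essentially the same route as the paper's proof: approximate $Q^{*}(s,a)$ by the balanced target $Q_{T_{n}}$ so that the correction term becomes $\eta\delta_{n}$, then equate the modified target with a balanced target at step $n+1$ parameterized by $\beta'$ and solve (the paper phrases this equating via the TD errors $\delta'_{n+1}$, but after the $Q_{n+1}(s,a)$ terms cancel it is the identical algebra). Your added remark on the degenerate case $\underset{a'}{\max}\,Q_{n+1}(s',a')=\underset{a'}{\min}\,Q_{n+1}(s',a')$ is a reasonable well-posedness caveat that the paper's proof does not address explicitly.
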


\begin{proof}
$Q^{*}(s,a)$ in Equation \ref{eq:newtarget} is initially approximated
as $Q_{T_{n}}=r(s,a)+\gamma[\beta_{n}\underset{a'}{max}Q_{n}(s',a')+(1-\beta_{n})\underset{a'}{min}Q_{n}(s',a')]$
using the balancing factor $\beta_{n}$ (Initialized to $1$). Subsequently,
Equation \ref{eq:newtarget} can be re-written as:

\begin{equation}
Q_{T_{n+1}}'=Q_{T_{n+1}}+\eta[Q_{T_{n}}-Q_{n}(s,a)]\label{eq:Qtarg}
\end{equation}

\begin{equation}
=Q_{T_{n+1}}+\eta\delta_{n}\label{eq:modtarget}
\end{equation}

where $\delta_{n}$, the TD error associated with the $n^{th}$ update
is:

\begin{equation}
\delta_{n}=r(s,a)+\gamma[\beta_{n}\underset{a'}{max}Q_{n}(s',a')+(1-\beta_{n})\underset{a'}{min}Q_{n}(s',a')]-Q_{n}(s,a)\label{eq:delta_n_beta_0}
\end{equation}

or 
\begin{equation}
\delta_{n}=Q_{T_{n}}-Q_{n}(s,a)\label{eq:TDgeneral}
\end{equation}

Corresponding to these estimates, the ideal TD error for the subsequent
(i.e., $(n+1)^{th}$) update of the state-action pair can be computed
analogous to Equation \ref{eq:TDgeneral} as: 

\begin{equation}
\delta'_{n+1}=Q'_{T_{n+1}}-Q_{n+1}(s,a)\label{eq:deltanew}
\end{equation}

Using Equation \ref{eq:modtarget}, the above relation can be expressed
as:

\begin{equation}
\delta'_{n+1}=Q_{T_{n+1}}+\eta\delta_{n}-Q_{n+1}(s,a)\label{eq:deltadash}
\end{equation}

In addition, with the current estimate of the balancing factor $\beta_{n}$,
$Q_{T_{n+1}}$ in the above equation can be further expanded as:

\[
Q_{T_{n+1}}=r(s,a)+\gamma[\beta{}_{n}maxQ_{n+1}(s',a')+(1-\beta_{n})\underset{a'}{min}Q_{n+1}(s',a')]
\]

Substituting this in Equation \ref{eq:deltadash}, we obtain the ideal
TD error associated with the $(n+1)^{th}$ update, as per the current
value of balancing factor $\beta_{n}$:

\begin{equation}
\delta'_{n+1}=r(s,a)+\gamma[\beta_{n}\underset{a'}{max}Q_{n+1}(s',a')+(1-\beta_{n})\underset{a'}{min}Q_{n+1}(s',a')]+\eta\delta_{n}-Q_{n+1}(s,a)\label{eq:firsteq}
\end{equation}

This TD error $\delta'_{n+1}$ can be assumed to be associated with
an equivalent balancing factor $\beta',$ such that:

\begin{equation}
\delta'_{n+1}=r(s,a)+\gamma\left[\beta'\underset{a'}{max\thinspace}Q_{n+1}(s',a')+\left[1-\beta'\right]\underset{a'}{min\thinspace}Q_{n+1}(s',a')\right]-Q_{n+1}(s,a)\label{eq:delta_n_1_updated}
\end{equation}

From Equations \ref{eq:firsteq} and \ref{eq:delta_n_1_updated},
we get:
\begin{equation}
\beta'=\beta_{n}+\frac{\eta\delta_{n}}{\gamma[\underset{a'}{max\thinspace}Q_{n+1}(s',a')-\underset{a'}{min\thinspace}Q_{n+1}(s',a')]}\label{eq:betavalue}
\end{equation}
\end{proof}
Hence, at each step, we use an initial estimate $\beta_{n}$, and
compute $\beta'$, (clipped to $[0,1]$ if needed; clipping ensures
that only realistic targets are used), which is used to compute the
balanced target, based on which the action values are updated. The subsequent value of $\beta_{n}$
is also updated online, as explained later.

\textbf{Interpreting the $\beta'$ update:} Equation \ref{eq:betavalue}
suggests that when $\delta_{n}=0$, an appropriate value of $\beta_{n}$
is being used to determine the balanced target, and thus, $\beta'$
assumes the same value as $\beta_{n}$. When $\delta_{n}\neq0$, the
direction of the update depends solely on $\delta_{n}$, as the denominator
$[\underset{a'}{max\thinspace}Q_{n+1}(s',a')-\underset{a'}{min\thinspace}Q_{n+1}(s',a')]$
is always positive. $\delta_{n}$, (which can be expressed as $\delta_{n}=Q_{T_{n}}-Q_{n}(s,a)$),
when positive, indicates that the estimated action value $Q_{n}(s,a)$
still falls short of the determined balanced target $Q_{T_{n}}$ (which
is determined using $\beta_{n}$). Hence, in order to drive up the
value of $Q_{n}(s,a)$, $\beta'$ is updated to a value larger than
$\beta_{n}$, increasing the subsequent update's reliance on the maximization
term. Similarly, when $\delta_{n}$ is negative, it implies that $Q_{n}(s,a)$
exceeds the balanced target $Q_{T_{n}}$. In order to drive down the
value of $Q_{n}(s,a)$, $\beta'$ is updated to a lower value, increasing
its reliance on the minimization term. Hence, $\beta'$ makes \emph{sample-specific adjustments}
to the subsequent target in order to drive up or drive down the current
action value estimate as required.

%When , the deviation of  from  depends on the value of , relative to . When this term is large, it implies there is a large enough difference between the best and worst actions in state , and hence the action values need not be consderably changed via large updates to . On the other hand, when  is relatively small with respect to , it implies a relatively small difference between the actions in , thereby justifying significant deviations from  in order to effect correspondingly large changes in the function. 

\textbf{Updating $\beta_n$:} Initialized as $1$, $\beta_{n}$ is
subsequently updated as the incremental average of $\beta'$ following each action value update:

 %In tabular settings,  for a state-action pair  can simply assume the latest value of  for a given . However, in continuous state settings, such a tabular record of  cannot be maintained, and we determine  as the incremental average of , computed online during learning. That is, after each update,  is updated as: 

\begin{equation}
\label{eqn:betan_update}
\beta_{n+1}=(n\beta_{n}+\beta')/(n+1)
\end{equation}

As $\beta_{n}$ controls the general extent of reliance on the best
and worst possible estimated future returns for all interactions,
it can be interpreted as an estimate of the \emph{average degree of optimism}
for a given environment. This is in contrast to $\beta'$, which is
a sample-specific degree of optimism. Algorithm \ref{alg:Balanced-Q-learning}
summarizes the steps involved in balanced DQN, a DQN variant of
balanced $Q-$learning. A tabular version of the algorithm is shown
in Appendix \ref{sec:Tabular-Implementation:}. The process is also
pictorially depicted in Figure \ref{fig:flow}.

\textbf{Practical considerations:} When
action values are learned using a DQN-like approach, it is common
to update the neural network parameters by sampling batches of transitions (As
shown in Algorithm \ref{alg:Balanced-Q-learning}). That is, corresponding each batch, a single action value update is carried out. Since the average value of $\beta_n$ (Equation \ref{eqn:betan_update}) is computed as the average value of $\beta'$ used over the action-value updates, for each batch, we compute $\beta_{batch}'$ as the mean value of $\beta'$ corresponding to samples in the batch (and treat this as a representative value for that batch), and use this value to update $\beta_n$. Doing so allows $\beta_n$ to be computed as the average value of $\beta'$ over action-value updates. We treat this as a caveat associated with the DQN-variant of our algorithm.
%Since such algorithms involve carrying out a single update of the $Q-$network parameters for each sampled batch (and not each transition), it is important that the update is carried out with respect to a single value of $\beta'$. Doing so ensures that the $Q-$network parameters are updated towards a single target direction. In our approach, since each transition could be associated with a different value of $\beta'$, it is possible that a sampled batch contains transitions corresponding to varying degrees of $\beta'$ values (corresponding to different target directions of update). In order to ensure that updates are carried out with respect to a fixed target direction, $\beta'$ is computed as the mean of the balancing factors computed as per each sample in the batch, and the $Q-$ update is carried out with respect to this value of $\beta'$. We treat this as a caveat associated with the DQN-variant of our algorithm.

\begin{algorithm}[H]
\caption{Balanced DQN\label{alg:Balanced-Q-learning}}

\begin{algorithmic}[1]

\STATE \textbf{Input: }

\STATE Step sizes \textbf{$\alpha$}, $\eta,$ exploration parameter
$\epsilon$, discount factor $\gamma$, maximum number of steps $N_{max}$, Batch size $b$

\STATE Initialize replay buffer $D$, count $n=1$, $\beta_{1}=1$
and $Q-$network $Q(s,a,\theta)$

\STATE Initialize $Q_{n}$ and $Q_{n+1}$ as $Q(s,a,\theta)$

\STATE Get initial state $s$

\STATE \textbf{Output: }Learned value function $Q_{N_{max}}$\textbf{ }

\WHILE {$n\leq N_{max}$} 

\STATE Use $\epsilon-$greedy strategy to choose action $a$; observe
$r,s'$

\STATE Store transition $(s,a,r,s')$ in $D$

\STATE Sample mini-batch $B$ of size $b$ from $D$

\FOR {$m\in B$}

\STATE Compute TD error $\delta_{n}$ using $Q_{n}$ (Equation \ref{eq:delta_n_beta_0})$:$

\STATE Obtain $\beta'_{m}$ for sample $m$ as per Equation \ref{eq:betavalue}:

$\beta_{m}'=\beta_{n}+\frac{\eta\delta_{n}}{\gamma[\underset{a'}{max\thinspace}Q_{n+1}(s_{m}',a')-\underset{a'}{min\thinspace}Q_{n+1}(s_{m}',a')]}$

\STATE Clip $\beta_{m}'$ to the range $[0,1]$

\STATE Compute target 
\[
Q_{T_{m}}=r_{m}(s_{m},a_{m})+\gamma\beta_{m}'\underset{a'}{max\thinspace}Q_{n+1}(s_{m}',a')+(1-\beta_{m}')\underset{a'}{min\thinspace}Q_{n+1}(s'_{m},a')]
\]

\ENDFOR

\STATE $\beta_{batch}'=\frac{1}{b}\underset{i}\sum\beta_i'$

\STATE Update $\beta_{n+1}$ as $\beta_{n+1}=(n\beta_{n}+\beta_{batch}')/(n+1)$

\STATE Store $Q_{n}$ as the current estimate: $Q_{n}\leftarrow Q(s,a;\theta)$

\STATE Update network parameter $\theta$ using the computed targets
$Q_{T}$:

$\theta\thickapprox argmin_{\theta}[Q_{T}-Q(s,a;\theta)]^{2}$

\STATE Store $Q_{n+1}$ as the updated estimate: $Q_{n+1}\leftarrow Q(s,a;\theta)$

\STATE Update state: $s\leftarrow s'$

\STATE Update count: $n\leftarrow n+1$

\ENDWHILE

\end{algorithmic} 
\end{algorithm}

Balanced $Q-$learning can also be shown to converge in tabular environments
as long as $\eta\leq\gamma$, in addition to other standard conditions
on the step size $\alpha$ (Theorem \ref{thm:convergence}). %A similar formulation has been previously proposed by Gaskett , but only fixed values of  were considered. Using such fixed values offers limited flexibility with regards to learning in different environments, some of which may involve scenarios that prefer high or low values of  in different regions of the state-action space. In order to automatically balance the degree of optimism and pessimism online, we present an approach for determining the appropriate value of  online. 
\begin{theorem}
In a finite MDP ($\mathcal{S,A,T,R}),$ balanced $Q-$learning, g\label{thm:convergence}iven
by the update rule:
\[
Q_{n+1}(s,a)=Q_{n}(s,a)+\alpha_{n}(s,a)\left[Q_{T}-Q_{n}(s,a)\right]
\]
where
\[
Q_{T}=r(s,a)+\gamma\left[\beta'\underset{a'}{max\thinspace}Q_{n}(s',a')+(1-\beta')\underset{a'}{min\thinspace}Q(s',a')\right]
\]
converges to a fixed point with probability 1 as long as $\eta\leq\gamma$,
$\underset{n=1}{\overset{\infty}{\sum}}\alpha_{n}(s,a)=\infty$ and
$\underset{n=1}{\overset{\infty}{\sum}}\alpha_{n}^{2}(s,a)<\infty$
$\forall(s,a)\in\mathcal{S\times\mathcal{A}}$, where $\eta$ is a
step size hyperparameter. 
\end{theorem}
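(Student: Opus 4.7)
The plan is to reduce the balanced $Q$-learning update to the standard stochastic-approximation framework for $\gamma$-contractions in the sup-norm, and then identify the fixed point it converges to. First, I would introduce, for each $\beta \in [0,1]$, the balanced Bellman operator
\[
(H_\beta Q)(s,a) = r(s,a) + \gamma \sum_{s'} \mathcal{T}(s'\mid s,a)\bigl[\beta \max_{a'} Q(s',a') + (1-\beta) \min_{a'} Q(s',a')\bigr],
\]
and verify that it is a $\gamma$-contraction in the sup-norm: both $\max$ and $\min$ are $1$-Lipschitz with respect to $\|\cdot\|_\infty$, and a convex combination inherits this property, so $\|H_\beta Q_1 - H_\beta Q_2\|_\infty \leq \gamma \|Q_1-Q_2\|_\infty$. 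Because $\beta'$ is clipped to $[0,1]$ at every step, the operator actually applied at iteration $n$ is some $H_{\beta'_n}$ with $\beta'_n \in [0,1]$, hence always a $\gamma$-contraction with the same modulus.

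Second, I would exploit the algebraic identity implicit in Proposition~\ref{thm:betarulethm} to decompose the sample target as $Q_T = (H_{\beta_n} Q_n)(s,a) + \eta\, \delta_{n-1}$ (prior to clipping), where $\delta_{n-1}$ is the preceding TD error. This recasts the iteration in the form $Q_{n+1} = (1-\alpha_n)Q_n + \alpha_n\bigl[(H_{\beta_n} Q_n) + \eta\,\delta_{n-1} + w_n\bigr]$, with $w_n$ a zero-mean sampling noise driven by the random next state. I would then show that $\delta_{n-1}$ itself can be bounded by a Bellman residual plus sampling noise, so the whole right-hand side fits the canonical stochastic-approximation template.

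Third, I would verify the hypotheses of the classical contraction-based stochastic approximation theorem (Jaakkola--Jordan--Singh; Tsitsiklis): (a) the step-size conditions on $\alpha_n$ are in the assumption; (b) bounded conditional variance follows from the finiteness of $\mathcal{S,A,R}$ together with the boundedness of the iterates; and (c) the key contraction inequality $\|\mathbb{E}[F_n\mid\mathcal{F}_n]\|_\infty \leq \kappa\,\|Q_n-Q^*\|_\infty + c_n$ with $\kappa<1$ and $c_n \to 0$ must hold. This last inequality is where $\eta \le \gamma$ enters: iterating the relation $|\delta_{n-1}| \lesssim \|Q_{n-1}-Q^*\|_\infty + \eta\,|\delta_{n-2}|$ yields a finite geometric bound precisely when $\eta<1$, and the choice $\eta \le \gamma$ is what allows the additional $\eta\,\delta_{n-1}$ perturbation to be absorbed into the $\gamma$-contraction without destroying the contraction property.

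The main obstacle is the coupling between $Q_n$ and the time-varying $\beta_n$. Since $\beta_{n+1}=(n\beta_n+\beta'_{\mathrm{batch}})/(n+1)$ (Equation \ref{eqn:betan_update}) is an incremental Cesaro average of quantities bounded in $[0,1]$, $\beta_n$ admits convergent subsequences; because $H_\beta$ has a unique fixed point for each $\beta \in [0,1]$, a two-time-scale argument (with $\beta_n$ evolving on the slower scale $1/(n+1)$) would identify a single limiting pair $(\beta^\star, Q^\star_{\beta^\star})$ and let the $Q$-iterate track it. Making this two-time-scale coupling rigorous while keeping the contraction bound in step (c) tight is the technical heart of the argument.
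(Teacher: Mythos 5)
Your step (1) is fine (for each fixed $\beta\in[0,1]$ the balanced operator $H_\beta$ is a $\gamma$-contraction in $\|\cdot\|_\infty$), and your decomposition of the pre-clipping target as $H_{\beta_n}Q$ plus $\eta$ times a TD error is the right reading of Proposition~\ref{thm:betarulethm}. The genuine gap is in your step (c), which is the load-bearing one. If you analyze the pre-clipping target, the expected composite operator is $T(Q)=(1+\eta)H_{\beta}Q-\eta Q$ (since the TD error is itself a sampled $H_\beta Q - Q$), whose best sup-norm Lipschitz bound is $(1+\eta)\gamma+\eta$; under the stated condition $\eta\leq\gamma$ this can exceed $1$ (e.g.\ $\gamma=0.99$, $\eta=0.2$ gives $\approx 1.39$, exactly the regime the experiments use), so the perturbation cannot simply be ``absorbed into the $\gamma$-contraction.'' Your fallback observation that after clipping the applied operator is $H_{\beta'}$ with $\beta'\in[0,1]$ does not close this, because $\beta'$ depends on the current estimates: the map $Q\mapsto H_{\beta'(Q)}Q$ with a $Q$-dependent mixing weight is not $\gamma$-Lipschitz (two value functions differing by $\varepsilon$ can yield $\beta'=1$ versus $\beta'=0$ and hence targets differing by $\gamma[\max_{a'}Q(s',a')-\min_{a'}Q(s',a')]$). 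Moreover the recursion $|\delta_{n-1}|\lesssim\|Q_{n-1}-Q^{*}\|_\infty+\eta|\delta_{n-2}|$ that you propose to iterate does not match the algorithm: $\delta_n$ is computed with the Ces\`aro-averaged $\beta_n$ (Equation~\ref{eq:delta_n_beta_0}), not with the previous $\beta'$, so $\delta$ does not recurse on itself with factor $\eta$; and treating $\eta\delta_{n-1}$ as a vanishing offset $c_n\to 0$ presupposes the convergence you are trying to prove.

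The paper's own route is different and much shorter: it substitutes the $\beta'$ update (Equation~\ref{eq:betavalue}) into the bracketed term $\beta'\max_{a'}Q_{n+1}(s',a')+(1-\beta')\min_{a'}Q_{n+1}(s',a')$ and appeals to the generalized $Q$-learning convergence criterion of Szepesv\'ari and Littman \cite{szepesvari1996generalized}, which asks only for a non-expansion of the target operator; the condition $\eta\leq\gamma$ enters there solely to make the component $\frac{\eta}{\gamma}(r(s,a)-Q_n(s,a))$ non-expansive. So no Jaakkola--Jordan--Singh contraction-plus-perturbation argument and no two-time-scale analysis appear in the paper at all. By contrast, your plan hinges on two steps you leave open: the contraction inequality with the $\eta\delta$ term (which, as argued above, fails quantitatively as stated), and the two-time-scale coupling of $\beta_n$ and $Q_n$, where subsequential convergence of the bounded Ces\`aro averages $\beta_n$ is not enough to identify a single limiting pair $(\beta^{\star},Q^{\star}_{\beta^{\star}})$. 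As it stands the proposal is an interesting alternative framework but does not yet constitute a proof of Theorem~\ref{thm:convergence}.
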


\begin{proof}
(Proof of Theorem \ref{thm:convergence} is provided in Appendix \ref{subsec:Proof-of-Theorem_convergence}) 
\end{proof}

We note that due to the `min' operator in the learning target, balanced $Q-$learning converges to a more risk-averse fixed point compared to algorithms such as standard $Q-$learning, where the learning target depends solely on a maximization term.  Although this may not be optimal in terms of reward maximization, it aids online learning performance by accounting for high-risk actions in an environment.

\section{Experiments\label{sec:Experiments}}

\begin{figure}
\centering{}\includegraphics[width=0.6\columnwidth]{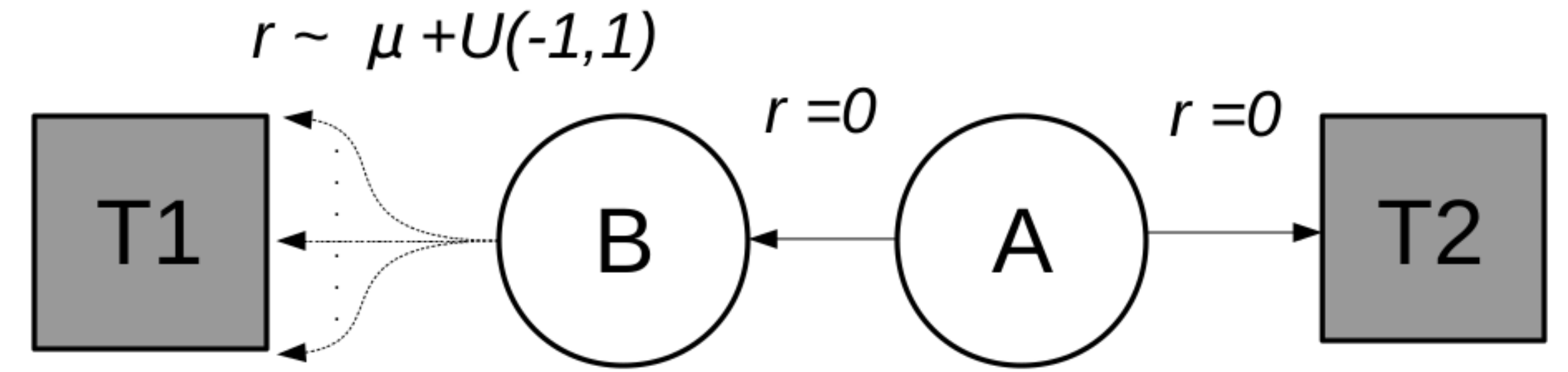}\caption{A simple MDP with two non-terminal states $A$ and $B$, and two terminal
states $T1$ and $T2$. The transition into $T1$ is associated with
a reward $r\sim\mu+U(-1,1)$, where $\mu$ is the mean reward. All
other transitions are associated with a reward of $0$. Depending
on whether $\mu$ is positive or negative, the optimal action from
\textbf{$A$ }is to move either left or right.}
\label{fig:lineworld} 
\end{figure}

In this section, we demonstrate balanced $Q-$learning on a simple
MDP (Figure \ref{fig:lineworld}), following which we present further
empirical comparisons in other environments. We choose DQN\cite{mnih2015human},
double DQN\cite{van2016deep}, maxmin DQN \cite{lan2019maxmin},
averaged DQN \cite{anschel2017averaged} and REDQ learning \cite{chen2021randomized}
as baselines for comparison.

\begin{figure}
\begin{raggedright}
\begin{minipage}[t]{0.5\textwidth}%
\noindent \begin{center}
\includegraphics[width=1\textwidth]{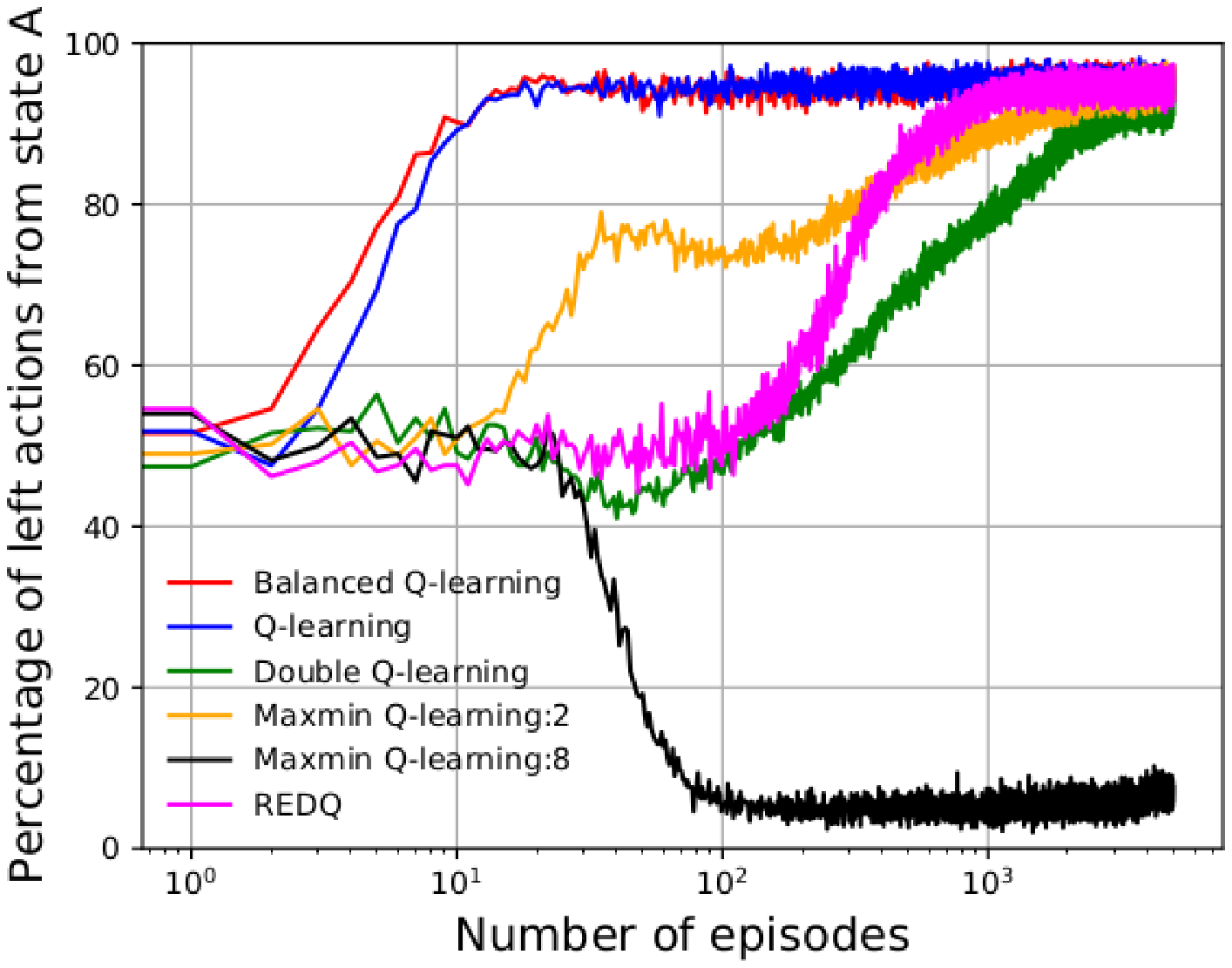} 
\par\end{center}
\begin{center}
(a) Mean reward $\mu=+0.1$. Higher values are better. 
\par\end{center}%
\end{minipage}%
\begin{minipage}[t]{0.5\textwidth}%
\noindent \begin{center}
\includegraphics[width=1\textwidth]{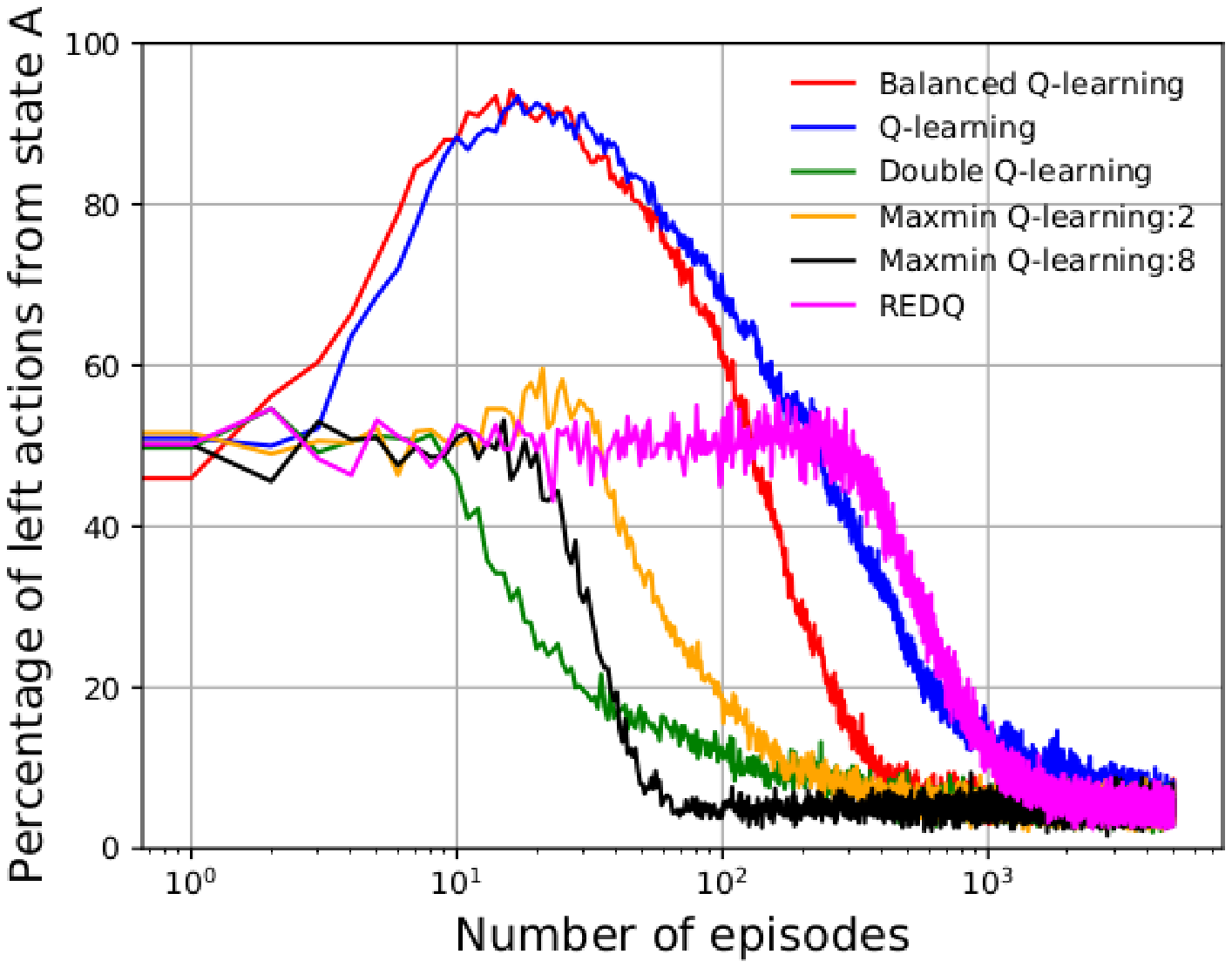} 
\par\end{center}
\begin{center}
(b) Mean reward $\mu=-0.1$. Lower values are better. 
\par\end{center}%
\end{minipage}%
\par\end{raggedright}
\raggedright{}\caption{\label{fig:lineworld_leftactions}The percentage of left actions taken
from state $A$ during training, computed over $500$ trials, for
the two reward settings: (a) $\mu=+0.1$ and (b) $\mu=-0.1$.}
\end{figure}

\begin{figure}
\centering
\includegraphics[width=0.5\textwidth]{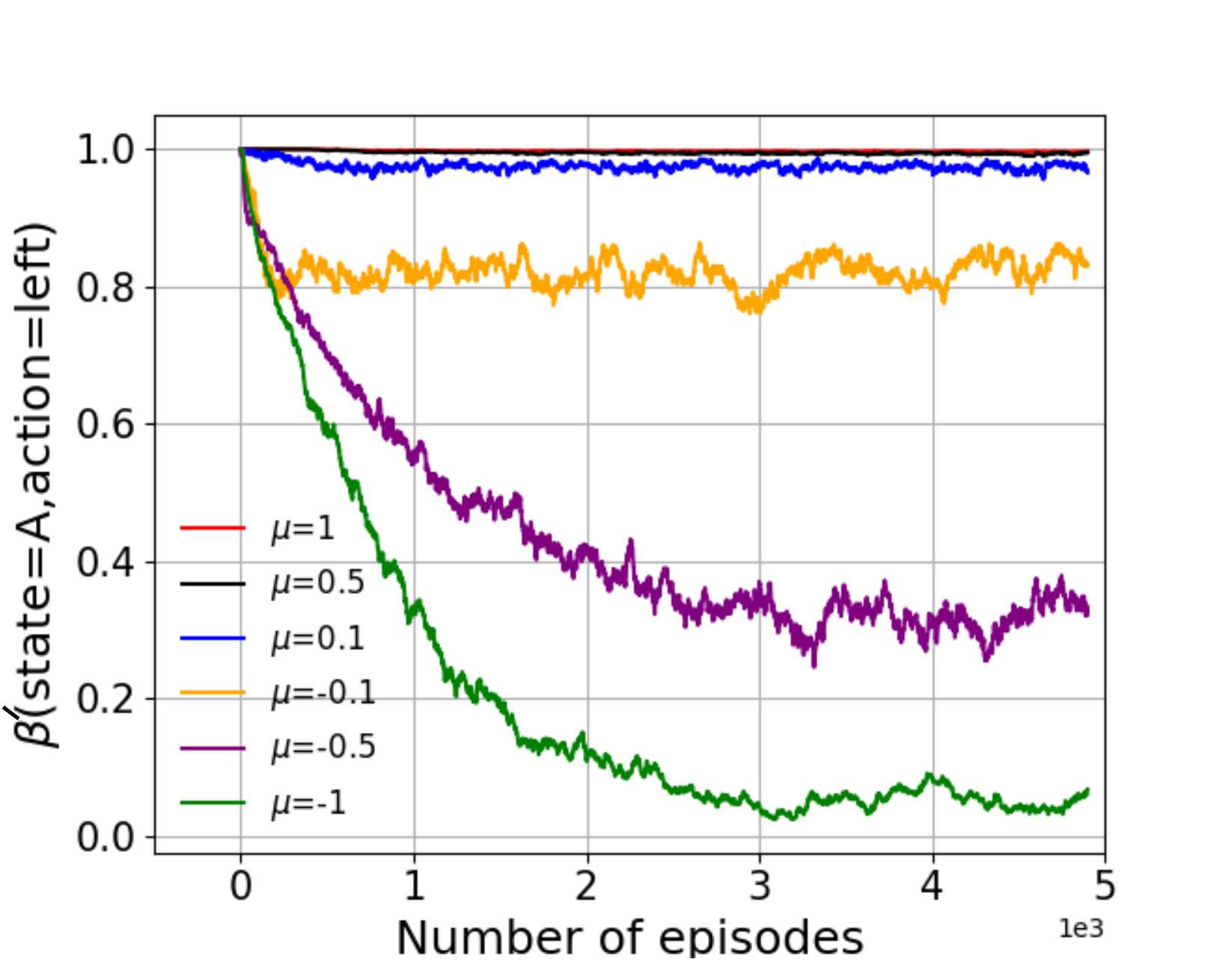} 
\caption{\label{fig:lineworld_leftactions_betavar}Plot showing the variation of $\beta'$ for left actions taken from state
A of Figure \ref{fig:lineworld}, for different values of mean rewards
$\mu$..}
\end{figure}

\textbf{Robustness to different bias preference scenarios: }Certain
learning scenarios could benefit from an inherent overestimation bias,
whereas in other scenarios, underestimation may be preferred. Hence,
it is undesirable for the underlying learning algorithm to be strictly
associated with either of these bias types. The goal of balanced $Q-$learning
is to adaptively provide the correct type of biases, depending on
the scenario.

We demonstrate this property using a simple MDP shown in Figure \ref{fig:lineworld},
previously used to study overestimation issues during learning \cite{sutton1998reinforcement,lan2019maxmin}.
It consists of non-terminal states $A$ and $B$, and terminal states
$T1$ and $T2$. An episode begins in state $A$, from which the agent
can choose between two actions: to move left (towards state $B$)
or right (towards state \textbf{$T2$}), both of which return a reward
of $0.$ From state $B$, the agent can choose from $8$ different
actions, all of which take it to state $T1$, but the reward corresponding
to these actions is drawn from a uniform distribution $U(1,-1)$,
with mean $\mu$. That is, $r\sim\mu+U(1,-1)$.

Intuitively, if the the mean reward $\mu$ is positive, the action
of going left from state $A$ is of relatively high value (and thus
benefits from an overestimation bias, as explained in the discussion
following Theorem \ref{thm:theorem1}), and the optimal policy at
state $A$ is to always move left. Similarly, if $\mu$ is negative,
the action of moving left from state $A$ is of relatively low-value
(benefits from underestimation), and the optimal policy is to always
move right, into state $T2$. Corresponding to these settings, we
set $\mu$ to be $+0.1$ and $-0.1$, and test the performance of
several $Q-$learning variants with balanced $Q-$learning.

In order to evaluate the performance of the different agents, we compute
the percentage of instances (over $500$ trials) where left actions
were chosen from state $A$ during training. In the reward setting:
$\mu=+0.1$ (Figure \ref{fig:lineworld_leftactions}(a)), a higher
percentage of left actions is better (ideally, $100\%$). Here, $Q-$learning,
inherently associated with an overestimation bias, performs well,
whereas algorithms characterised by significant underestimation biases
(Double $Q-$learning and Maxmin $Q-$learning ($N=8$) ) perform
relatively poorly. Balanced $Q-$learning, initialized with a balancing factor $\beta=1$, performs on par with $Q-$learning.

For the reward setting: $\mu=-0.1$ (Figure \ref{fig:lineworld_leftactions}(b)),
a lower percentage of left actions is better (ideally, $0\%$). In
this scenario, the overestimation bias associated with $Q-$learning
negatively affects its performance, which is characterised by the
initial peak in the percentage of left actions (shown in Figure \ref{fig:lineworld_leftactions}
(b)), only recovering to close-to-ideal values around $4000$ episodes.
Balanced $Q-$learning also exhibits an initial tendency to overestimate
the value of left actions. This is due to the initialization of balancing factor $\beta$ as $1$. However, it self-corrects to a low percentage
of left actions within just a few hundred episodes (\emph{an order of magnitude
faster} than $Q-$learning). This self-corrective nature allows balanced
$Q-$learning to perform relatively well in both reward settings,
showcasing its ability to remain relatively agnostic to the underlying
bias preferences of the environment. In contrast, the performance
of maxmin $Q-$learning is highly sensitive to the value of $N$ used.
Incorrectly setting $N$ could severely affect the learning performance,
depending on the environment's inherent tendency to prefer overestimation
or underestimation. This is not the case with balanced $Q-$learning.

The ability of balanced $Q-$learning to remain agnostic to different
bias preference scenarios is further demonstrated by creating environments
with different values of the mean reward $\mu$, and tracking the
$\beta'$ value of taking the left action from state $A$ during learning.
For environments with a high value of $\mu$, one would expect optimistic
updates to be preferred, whereas in environments with low $\mu$,
pessimistic updates would be preferred. As depicted in Figure \ref{fig:lineworld_leftactions_betavar}, balanced $Q-$learning maintains high values of $\beta'$ for
environments with high $\mu$, and converges to low values in environments
with low $\mu$, thus allowing it to perform well in a range of environments,
irrespective of their inherent bias preferences.

\textbf{Results on Benchmark Environments:} We first evaluate the
empirical performance of balanced DQN on multiple games from the MinAtar
environment \cite{young2019minatar}, which was designed to decouple
the representational complexity associated with Atari games \cite{bellemare2013arcade,mnih2015human}
from the task of learning useful behaviors. It has been used as a
benchmark environment in recent work exploring methods for controlling
the estimation bias \cite{lan2019maxmin,cini2020deep}.

\begin{figure}
\begin{raggedright}
\begin{minipage}[t]{0.5\textwidth}%
\noindent \begin{center}
\includegraphics[width=1\textwidth]{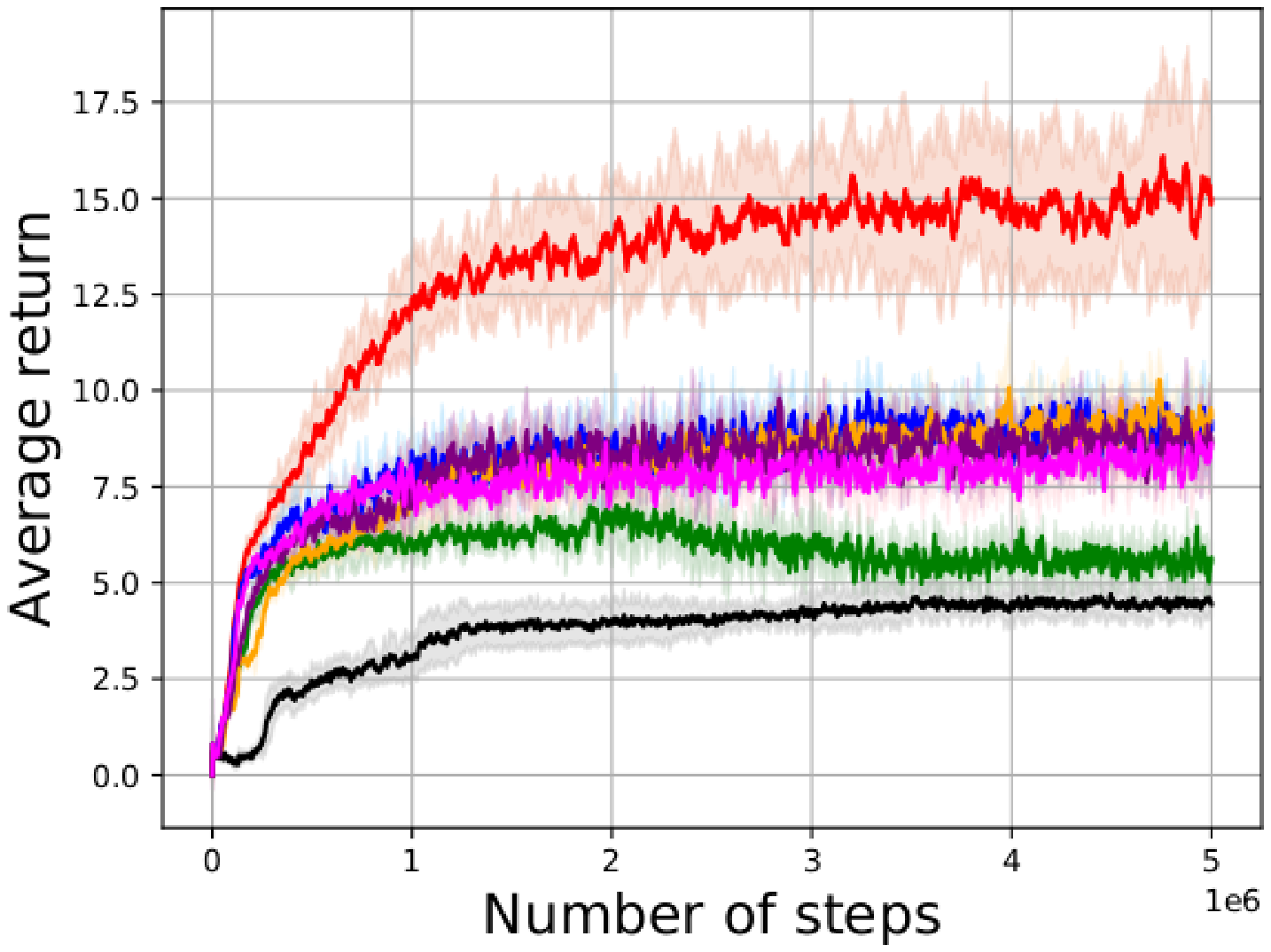} 
\par\end{center}
\begin{center}
(a) Breakout 
\par\end{center}%
\end{minipage}%
\begin{minipage}[t]{0.5\textwidth}%
\noindent \begin{center}
\includegraphics[width=1\textwidth]{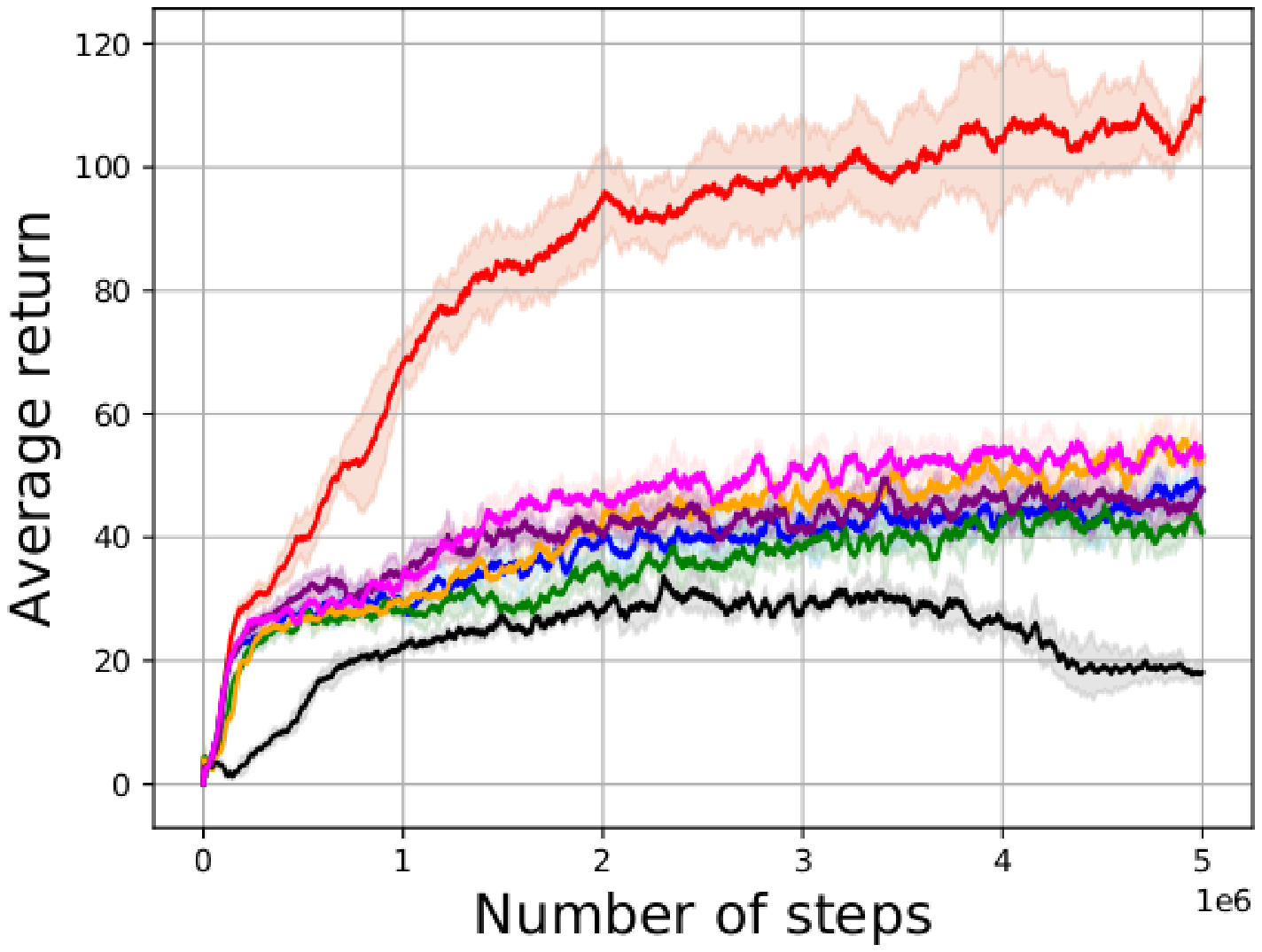} 
\par\end{center}
\begin{center}
(b) Space Invaders 
\par\end{center}%
\end{minipage}%
\\
\begin{minipage}[t]{0.5\textwidth}%
\noindent \begin{center}
\includegraphics[width=1\textwidth]{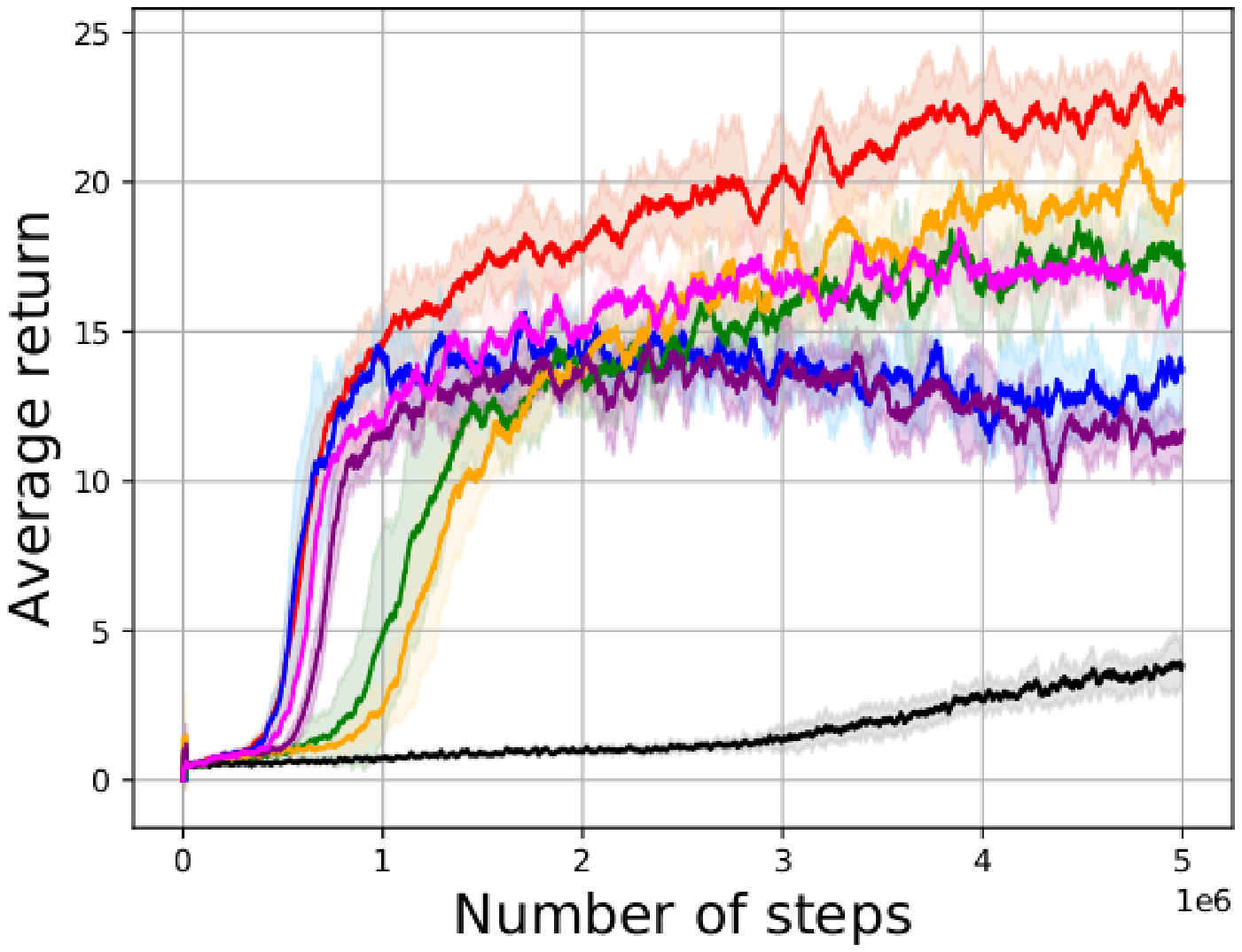} 
\par\end{center}
\begin{center}
(c) Asterix 
\par\end{center}%
\end{minipage}%
\noindent\begin{minipage}[t]{0.25\textwidth}%
\noindent \begin{center}
\includegraphics[width=1\textwidth]{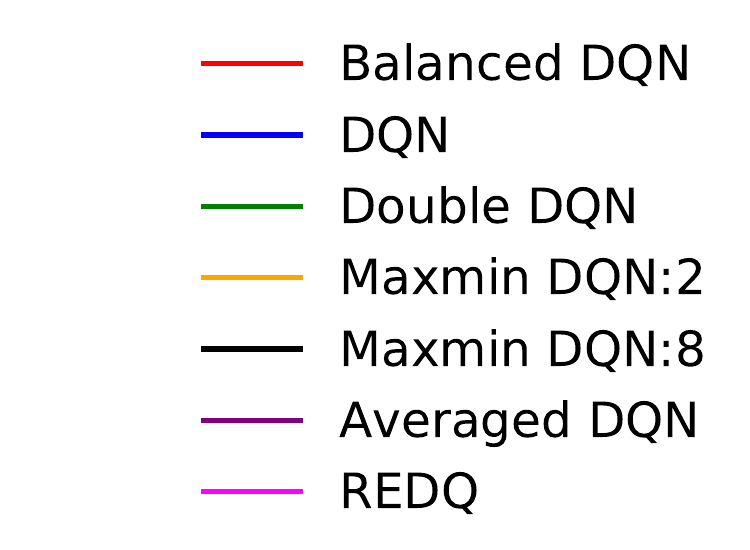}
\par\end{center}%
\end{minipage}
\par\end{raggedright}
\caption{Performance plots on the MinAtar environments (a) Breakout, (b) Space Invaders and (c) Asterix. The results are averaged
over $10$ runs, and the shaded regions represent standard deviation.
\label{fig:Performance-plots}}
\end{figure}

\begin{table}
\centering{}%
\resizebox{\columnwidth}{!}{
\begin{tabular}{c c c c c c c}
\hline 
 & {\scriptsize{}DQN} & {\scriptsize{}Double} & {\scriptsize{}Average } & {\scriptsize{}RedQ} & {\scriptsize{}Maxmin } & {\scriptsize{}Balanced}\tabularnewline
\hline 
\hline 
{\scriptsize{}Breakout} & {\scriptsize{}8.15\textpm 0.61} & {\scriptsize{}5.73\textpm 0.44} & {\scriptsize{}7.86\textpm 0.59} & {\scriptsize{}7.49\textpm 0.55} & {\scriptsize{}7.85\textpm 0.62} & \textbf{\scriptsize{}13.02\textpm 1.43}\tabularnewline
\hline 
{\scriptsize{}Asterix} & {\scriptsize{}12.07\textpm 1.08} & {\scriptsize{}12.18\textpm 1.29} & {\scriptsize{}11.06\textpm 0.90} & {\scriptsize{}13.80\textpm 0.83} & {\scriptsize{}12.80\textpm 1.09} & \textbf{\scriptsize{}17.11\textpm 1.21}\tabularnewline
\hline 
{\scriptsize{}Space Invaders} & {\scriptsize{}37.46\textpm 2.39} & {\scriptsize{}34.29\textpm 2.28} & {\scriptsize{}40.02\textpm 2.44} & {\scriptsize{}44.35\textpm 2.30} & {\scriptsize{}40.57\textpm 1.88} & \textbf{\scriptsize{}84.95\textpm 6.44}\tabularnewline
\hline 
{\scriptsize{}Seaquest} & {\scriptsize{}11.99\textpm 1.87} & {\scriptsize{}7.46\textpm 1.78} & {\scriptsize{}10.56\textpm 2.51} & {\scriptsize{}9.34\textpm 2.03} & {\scriptsize{}6.12\textpm 1.92} & \textbf{\scriptsize{}16.33\textpm 2.43}\tabularnewline
\hline 
{\scriptsize{}Freeway} & {\scriptsize{}45.00\textpm 0.86} & {\scriptsize{}41.99\textpm 2.26} & {\scriptsize{}44.37\textpm 1.01} & {\scriptsize{}45.01\textpm 0.36} & {\scriptsize{}41.34\textpm 1.22} & \textbf{\scriptsize{}45.21\textpm 0.56}\tabularnewline
\hline 
{\scriptsize{}Island Navigation} & {\scriptsize{}30.39\textpm 1.45} & {\scriptsize{}32.89\textpm 1.47} & {\scriptsize{}34.57\textpm 1.05} & {\scriptsize{}24.17\textpm 1.12} & {\scriptsize{}35.36\textpm 0.71} & \textbf{\scriptsize{}36.17\textpm 1.14}\tabularnewline
\hline 
{\scriptsize{}CartPole-v0} & {\scriptsize{}84.94\textpm 8.78} & {\scriptsize{}95.20\textpm 9.08} & {\scriptsize{}110.34\textpm 3.66} & {\scriptsize{}104.66\textpm 4.98} & \textbf{\scriptsize{}111.02\textpm 2.26} & {\scriptsize{}106.48\textpm 3.64}\tabularnewline
\hline 
{\scriptsize{}Tabular Navigation} & {\scriptsize{}0.57\textpm 0.07} & {\scriptsize{}0.47\textpm 0.09} & {\scriptsize{}-} & {\scriptsize{}0.53\textpm 0.10} & {\scriptsize{}0.05\textpm 0.05} & \textbf{\scriptsize{}0.58\textpm 0.08}\tabularnewline
\hline 
\end{tabular}}\caption{Average rewards (mean\textpm standard deviation) across environments
and baselines. Bold represents the highest mean value.\label{tab:table_performances}}
\end{table}
Figure \ref{fig:Performance-plots} depicts the performance in a subset
of the MinAtar environments over $10$ trials. As depicted in Figure
\ref{fig:Performance-plots} and Table \ref{tab:table_performances},
balanced DQN exhibits significant performance improvements in almost
all the MinAtar games, demonstrating its ability to perform consistently
in complex environments. We also consider: a tabular navigation environment
\cite{fernandez_probabilistic_2006}, Cartpole-v0 from OpenAI gym
\cite{brockman2016openai} and the Island Navigation environment
\cite{leike2017ai} to demonstrate the consistency of balanced DQN
across different environments. Table \ref{tab:table_performances}
demonstrates this consistency, which is unlike other approaches such
as maxmin DQN or double DQN, where the performance is highly dependent
on the nature of the environment under consideration. Performance
curves corresponding to the other environments have been included
in Appendix \ref{sec:suppl_perf_plots}. For each experiment, we used
a step size $\eta=0.2$, which was found to be consistently good across
environments (refer Appendix \ref{sec:Ablation}). The performance
of averaged DQN and REDQ learning and maxmin DQN was evaluated using
$N=8$ $Q-$networks for target estimation. We also evaluated maxmin
DQN at $N=2$. Further details on the hyperparameters used have been
specified in Appendix \ref{sec:hyperparams}. %It is also worth noting that balanced DQN, while effective in lower dimensional environments (Figure ), has the potential for drastic performance improvements in higher dimensional environments (Figure ).

\subsection{Balanced $Q-$Learning and Exploration:}

From the results thus far, it is evident that Balanced $Q-$learning achieves consistently good performances in multiple environments. As per the intuitions of Lan et al. \cite{lan2019maxmin}, appropriately biasing an agent would cause it to overexplore high-value regions and underexplore low value regions, leading to an overall improvement in the agent's performance. Here, we empirically demonstrate that Balanced $Q-$learning exhibits a similar exploratory nature, which could potentially explain its consistently good performance across environments. We consider the cliff world environment
\cite{sutton1998reinforcement} shown in Figure \ref{fig:cliffworld},  where the agent is tasked with navigating from states $S$ to state
$G$. Transitions into the `Cliff' and goal regions shown in Figure
\ref{fig:cliffworld} are terminal, after which the agent is reset
to the start position. The `Cliff' region is associated with a highly
negative reward of $-100$, whereas all other transitions are associated
with a reward of $-1$. Here, the $\epsilon-$greedy exploration takes
place with $\epsilon=0.1$, ensuring exploration noise throughout
learning. Other hyperparameter settings are set as follows: $\gamma=1$,
$\alpha=0.05$ and for balanced $Q-$learning, $\eta=0.2$.

\begin{figure}
\centering{}\includegraphics[width=0.5\columnwidth]{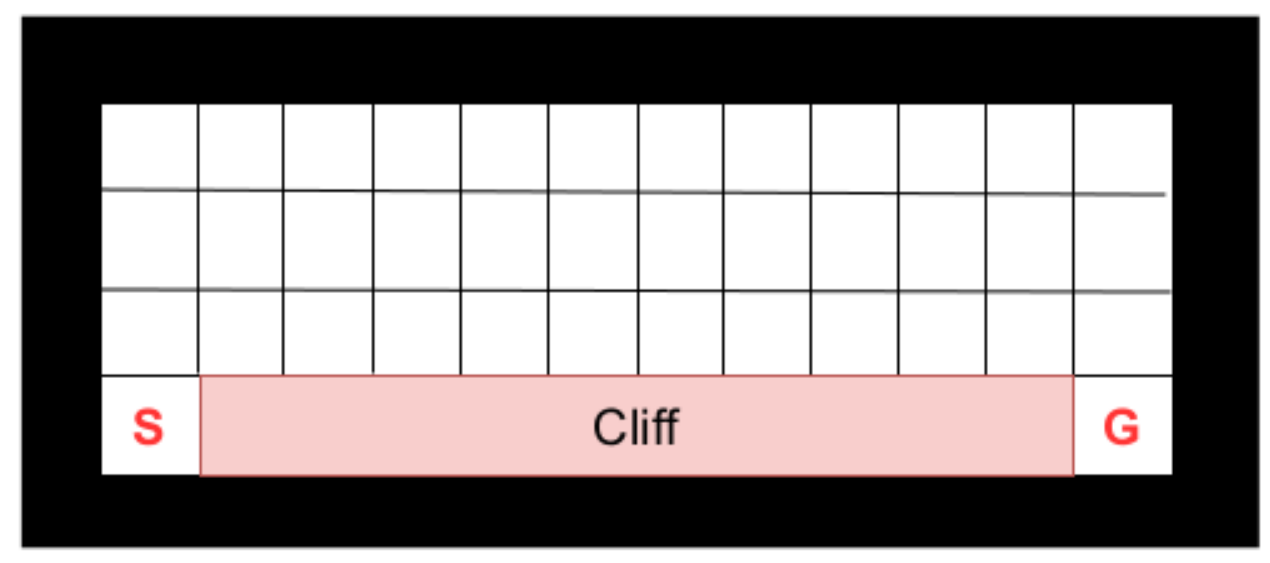}\caption{Cliff World environment. The `Cliff' region is associated with $-100$
reward and all other transitions are associated with $-1$ reward.}
\label{fig:cliffworld} 
\end{figure}

We compare balanced $Q-$learning with standard $Q-$learning by recording
the number of state visits for each state during learning. As depicted
in Figure \ref{fig:visitmap}, with balanced $Q-$learning, the agent
tends to follow more conservative paths, avoiding visiting states
very close to the cliff. This is explained by the fact that balanced
$Q-$learning tends to undervalue low-reward regions, which prevents
excessive exploration into these regions. On the other hand, high-reward regions are overvalued, and the agent encourages exploration into these regions. In contrast to this, standard
$Q-$learning simply follows the optimal path, despite the risk of
falling into the cliff.

\begin{figure}
\centering{}\includegraphics[width=0.5\columnwidth]{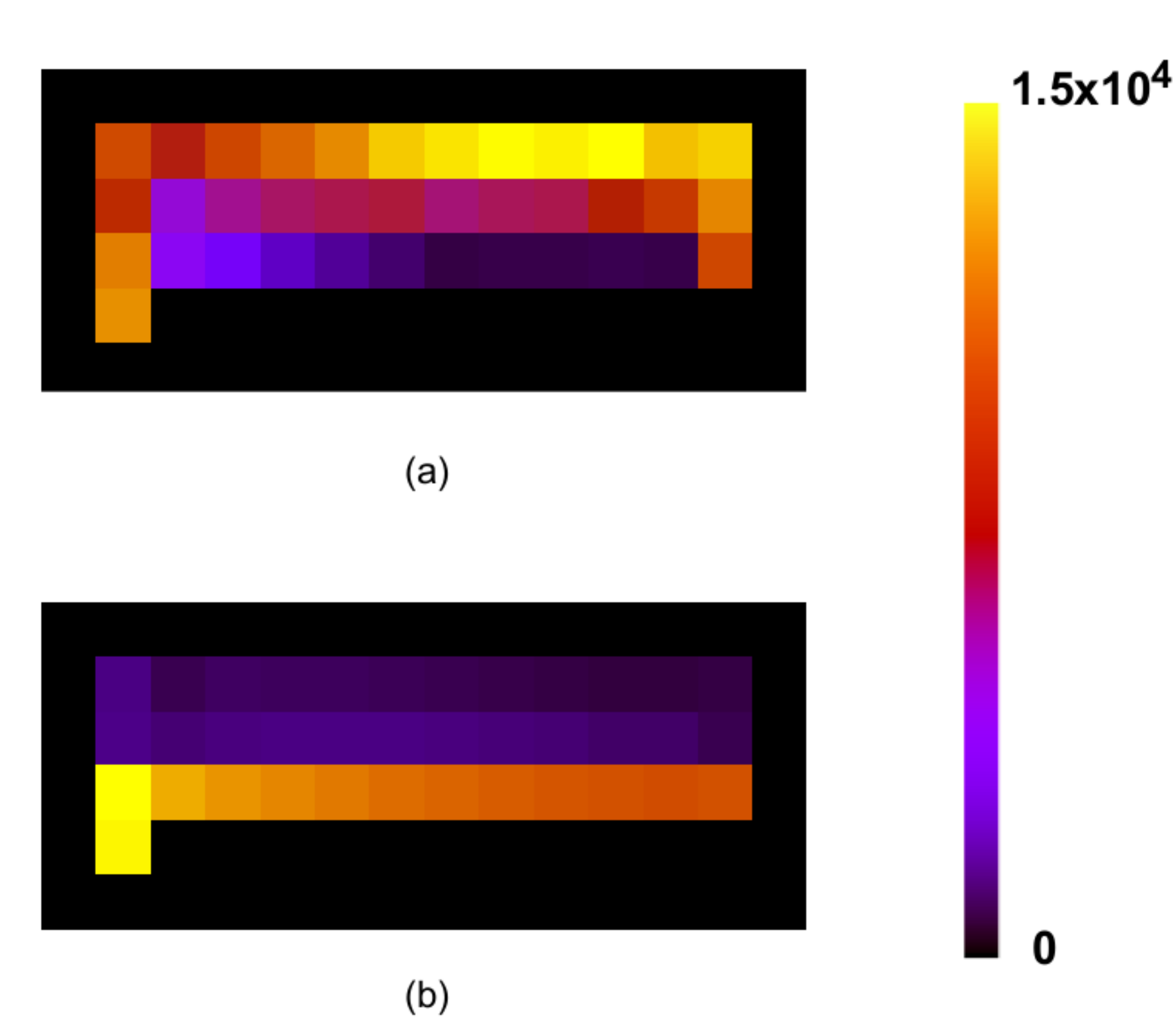}\caption{Visitation heat maps for (a) Balanced $Q-$learning and (b) $Q-$learning
for the cliff world shown in Figure \ref{fig:cliffworld}.}
\label{fig:visitmap} 
\end{figure}

\section{Related Work\label{sec:bg-1}}

Thrun and Schwartz \cite{thrun1993issues} first reported the issue
of systematic overestimation in $Q-$learning, which has since had
many proposed solutions. Double $Q-$learning \cite{hasselt2010double}
proposed maintaining two independent $Q-$functions, such that the
expected action value as per one of them is used to choose the best
action for the other. This idea was further extended to the case where
$Q-$functions were approximated using neural networks \cite{van2016deep}.
Although this double network architecture prevents overestimation,
it is accompanied by an underestimation bias, which is also potentially
harmful. Similar to the double architecture, TD3 \cite{fujimoto2018addressing}
also tackled the overestimation problem by maintaining two $Q-$functions,
the minimum of which was used to form the targets in the Bellman error
loss function. 

Recently, maxmin $Q-$learning \cite{lan2019maxmin} was proposed
as a generalized method to control the estimation bias associated
with $Q-$learning. The approach essentially involves maintaining
$N$ $Q-$functions, the minimum values of which are used to construct
the maximization target. Although maxmin $Q-$learning presents the
theoretical possibility of unbiased learning, its performance is highly
sensitive to $N$, which must be chosen beforehand. In addition, the
bias control is limited to discrete changes in $N$, whereas our approach
allows finer control by allowing $\beta$ to assume any real value
in the range $[0,1]$. Kuznetsov el al. \cite{kuznetsov2020controlling}
recently developed an approach for similar finetuned control of overestimation,
but in continuous action settings, using multiple critics. Chen et
al. \cite{chen2021randomized} also proposed an ensemble approach
similar to maxmin $Q-$learning suited to both discrete and continuous
action spaces. However, unlike in maxmin $Q-$learning, they leverage
high update-to-data ratios, and the learning target is determined
on the basis of a subset of all the available networks. Averaged $Q-$learning
\cite{anschel2017averaged} also determines the $Q-$ learning target
using multiple target networks. However, each of the target networks
are used to obtain previous estimates, whose average is used as the
modified target for $Q-$learning. This enabled the reduction of the
variance of the approximation errors of the target, leading to more
stable learning. 

Zhang et al. \cite{zhang2017weighted} proposed an approach to tackle
overestimation, using two estimators: one to estimate the maximum
expected action value, and the other to estimate the action value.
The learning target was expressed as a weighted sum of these estimates.
Although our approach shares similarities in the general objective,
it maintains a single estimator, and the basis for determining the
linear weights is rooted in contextually promoting the right types
of biases. In addition, by weighting the optimistic and pessimistic
targets, our approach can span the full range of biases. A similar
formulation was proposed by Gaskett \cite{gaskett2003reinforcement},
although only fixed weights were considered. Li and Hou\cite{li2019mixing}
also proposed a similar approach using double estimators, and fixed
weights. This limits the applicability of this approach, as it entails
foreknowledge of the appropriate weights for a particular environment.
In contrast to this, our proposed method dynamically adjusts these
weights during learning, raising or lowering it based on an analytically
derived update rule. The objective of doing so is to achieve a consistent
performance across environments, irrespective of the environment's
tendency to inherently suit a particular type of bias.

\section{Conclusion\label{sec:conclusion}}

Depending on the environment and the specific region of the state-action
space, both overestimation as well as underestimation bias can potentially
aid learning. Through simple derivations, we showed that overestimation
aids learning in high-value regions, and underestimation is preferable
in low-value regions of the state-action space. We proposed balanced
$Q-$learning, a variant of $Q-$learning where the target is constructed
using a combination of the maximum and minimum action values of the
next state, with the influence of each term being controlled by a
balancing factor. We analytically derived a rule for updating this
factor online, and showed that the resulting algorithm converges in
tabular settings. Through empirical evaluations of the proposed method,
we confirmed its robustness to varying reward structures, as well
as its ability to consistently achieve a good learning performance
in a variety of benchmark environments. 

\looseness=-1Currently, our approach only considers discrete
action settings. Extending the idea of using a balanced target in
continuous action spaces remains to be explored. Although we have
demonstrated consistently good performances across environments, compared
to DQN, we note that our approach requires the storage of two sets
of network parameters (one corresponding to update $n$ and the other
corresponding to $n+1$). This requirement is however relatively more
relaxed in comparison to other ensemble methods, and hence it could
constitute a scalable solution to the problem of achieving risk-aware behaviors by controlling the extent
of overestimation during learning.

\bibliography{test}

\section*{\newpage{}\protect\pagebreak{}}

\appendix

\section*{Appendix\label{sec:supp}}

\section{Proofs}

\subsection{Proof of Theorem \ref{thm:theorem1}\label{subsec:Proof-of-Theorem}}
\begin{proof}
Theorem \ref{thm:theorem1} can be proved by induction.

\textit{Base Case}: $m=1$

Substituting $m=1$ in Equation \ref{eq:biasperformance}, we get:
\[
Q^{*}(s,a)-Q_{n+1}(s,a)=(1-\alpha)\left[Q^{*}(s,a)-Q_{n}(s,a)\right]-\alpha t_{n}(s,a)
\]

or

\[
Q_{n+1}(s,a)=Q_{n}(s,a)+\alpha\left[Q^{*}(s,a)+t_{n}(s,a)-Q_{n}(s,a)\right]
\]

This corresponds to the general TD update equation (Equation \ref{eq:gen_TD}),
where the update target is given by $Q^{*}(s,a)+t_{n}(s,a)$. Denoting
$Q_{k}(s,a)$ and $t_{k}(s,a)$ as $Q_{k}$ and $t_{k}$ for brevity,
the above equation becomes:

\[
Q_{n+1}=Q_{n}+\alpha\left[Q^{*}+t_{n}-Q_{n}\right]
\]

\textit{Induction Step:} $m=k+1$

Assuming Equation \ref{eq:biasperformance} is true for $m=k$, we
shall prove that it holds for $m=k+1$

From the general TD equation, we get:

\[
Q_{n+k+1}=Q_{n+k}+\alpha\left[Q^{*}+t_{n+k}-Q_{n+k}\right]
\]

\[
Q^{*}-Q_{n+k+1}=Q^{*}-Q_{n+k}-\alpha\left[Q^{*}+t_{n+k}-Q_{n+k}\right]
\]

\[
Q^{*}-Q_{n+k+1}=(1-\alpha)\left[Q^{*}-Q_{n+k}\right]-\alpha t_{n+k}
\]

Substituting the value for $Q^{*}-Q_{n+k},$we get:

\[
Q^{*}-Q_{n+k+1}=(1-\alpha)\left[(1-\alpha)^{k}\left[Q^{*}-Q_{n}\right]-\alpha\sum_{i=1}^{k}(1-\alpha)^{i-1}t_{n+k-i}\right]-\alpha t_{n+k}
\]

\[
Q^{*}-Q_{n+k+1}=(1-\alpha)^{k+1}\left[Q^{*}-Q_{n}\right]-\alpha\left[(1-\alpha)\sum_{i=1}^{k}(1-\alpha)^{i-1}t_{n+k-i}+t_{n+k}\right]
\]

\[
Q^{*}-Q_{n+k+1}=(1-\alpha)^{k+1}\left[Q^{*}-Q_{n}\right]-\alpha\left[\sum_{i=2}^{k+1}(1-\alpha)^{i-1}t_{n+k-i+1}+t_{n+k}\right]
\]

\[
Q^{*}-Q_{n+k+1}=(1-\alpha)^{k+1}\left[Q^{*}-Q_{n}\right]-\alpha\sum_{i=1}^{k+1}(1-\alpha)^{i-1}t_{n+k+1-i}
\]
\end{proof}
This corresponds to Equation \ref{eq:biasperformance}, with $m=k+1$

\subsection{Proof of Theorem \ref{thm:convergence}\label{subsec:Proof-of-Theorem_convergence}}
\begin{proof}
The convergence of the balanced $Q-$learning is based on the non-expansion
property of the term $\beta'\underset{a'}{max}Q_{n+1}(s',a')+\left[1-\beta'\right]\underset{a'}{min}\thinspace Q_{n+1}(s',a')$
in the update target of balanced $Q-$learning \cite{szepesvari1996generalized}.

In order to prove this, we first begin with Equation \ref{eq:betavalue}
in the tabular case:

\[
\beta'=\beta_{n}+\frac{\eta\delta_{n}}{\gamma[\underset{a'}{max\thinspace}Q_{n+1}(s',a')-\underset{a'}{min\thinspace}Q_{n+1}(s',a')]}
\]

Using Equation \ref{eq:delta_n_beta_0} in the above equation, we
evaluate the expression $\beta'\underset{a'}{max}Q_{n+1}(s',a')+\left[1-\beta'\right]\underset{a'}{min}\thinspace Q_{n+1}(s',a')$
to get:

\begin{multline*}
=\gamma\beta_{n}[\underset{a'}{max\thinspace}Q_{n+1}(s',a')-\underset{a'}{min\thinspace}Q_{n+1}(s',a')]+\frac{\eta}{\gamma}(r(s,a)-Q_{n}(s,a))+\gamma\underset{a'}{min\thinspace}Q_{n+1}(s',a')\\
\\
+\eta[\underset{a'}{\beta_{n}max\thinspace}Q_{n}(s',a')-(1-\beta_{n})\underset{a'}{min\thinspace}Q_{n}(s',a')]\\
\end{multline*}

In the above expression, $0<\gamma\leq1$ and $0\leq\beta_{n}\leq1$.
The max and min operators are non-expansions \cite{szepesvari1996generalized},
and the term $\frac{\eta}{\gamma}(r(s,a)-Q_{n}(s,a))$ is non expansive
as long as $\eta\leq\gamma$. Hence, $\beta'\underset{a'}{max}Q_{n+1}(s',a')+\left[1-\beta'\right]\underset{a'}{min}\thinspace Q_{n+1}(s',a')$
is a non expansion, which satisfies the convergence criterion \cite{szepesvari1996generalized},
under the assumptions $\eta\leq\gamma$, $\underset{n=1}{\overset{\infty}{\sum}}\alpha_{n}(s,a)=\infty$
and $\underset{n=1}{\overset{\infty}{\sum}}\alpha_{n}^{2}(s,a)<\infty$
$\forall(s,a)\in\mathcal{S\times\mathcal{A}}$. 
\end{proof}

\section{Environments and Hyperparameter Settings:\label{sec:hyperparams}}

The tabular navigation environment involves an agent in a discrete
grid world, tasked with navigating to a predetermined location in
the environment. The agent receives a reward of $1$ to reaching the
goal location, and $0$ otherwise. CartPole-v0 is a classical control
task where the agent is tasked with vertically balancing a pole, hinged
on a cart, whose sideways motion can be controlled. Island Navigation
is a continuous state navigation environment designed to evaluate
the safe exploratory behavior of the agent. Here, the goal is to avoid
stepping into `water' locations while navigating to a location in
the environment. 

In the MinAtar environments, each trial was run for $5e6$ steps,
over $10$ trials, with the hyperparameters: batch size=$32$, $\gamma=0.99$,
step size= $2.5e-4$ and replay memory size=$1e5$. The exploration
parameter $\epsilon$ is initially set to $1$, and decayed linearly
over the first $1e5$ steps to a minimum value of $0.1$, after which
it was fixed at this value. The target network was updated every $1000$
steps and the optimizer used was RMSprop with gradient clip $5$.
In all the MinAtar plots, the average return for a step was obtained
by averaging over the previous $100$ episodes. Each MinAtar experiment
was performed on an Nvidia Tesla V100 (32GB) GPU, and on average,
took about 10 hours per trial per environment.

In tabular navigation, the state of the agent is comprised of its
horizontal and vertical positions on the grid, and its goal is to
navigate to a specific location. We solved this environment over $15$
trials ($\alpha=0.05$, $\gamma=0.95$ and $\epsilon$ initially set
to $1$, linearly decaying to $0$ in the final episode), running
it for $1e5$ steps.

For CartPole-v0, we used the following hyperparameter settings: $\alpha=0.001$,
$\gamma=0.95$, steps per episode=$200$, total number of steps=$1e4$.
$\epsilon$ is initially set to $1$, and decays to a minimum value
of $0.01$ as $\epsilon\leftarrow\epsilon\zeta$ after each step,
where $\zeta=0.999$. The function approximator is a $2-$ layered
feedforward neural network with $24$ nodes per hidden layer, with
ReLU activation functions, and trained with a batch size of $32$.

In the Island Navigation environment, we use a $2$ layered feed forward
neural network of $100$ nodes each with ReLU activations. The other
hyperparameters used are: step size $\alpha=0.001$, $\gamma=0.95$,
batch size=$32$, total number of steps=$2e5$. Similar to the cartpole,
environment, the initial value of $\epsilon$ is set to $1$, decaying
to a minimum value of $0.1$ with $\zeta=0.995$. The optimizer used
in Island Navigation and cartpole was Adam \cite{kingma2014adam}. 

In the simple MDP environment in Figure \ref{fig:lineworld}, the
hyperparameters used during learning were: $\eta=0.2$, discount factor
$\gamma=1$, step size $\alpha=0.01$, and $\epsilon$-greedy exploration
with $\epsilon=0.1$. All $Q-$values were initialized with a value
of $0$.

In all environments, REDQ learning, Averaged DQN and Maxmin DQN was
tested with $N=8$ (number of networks). Maxmin $Q-$learning was
also tested with $N=2$. In REDQ learning, we implemented a discrete
action version of the algorithm with the size of the subset of networks
chosen as $M=5$, and the update-to-data ratio $G=1$.

\section{Tabular Implementation:\label{sec:Tabular-Implementation:}}
The tabular implementation of balanced $Q-$learning is shown in Algorithm \ref{alg:Balanced-Q-learning_tabular}.
\begin{algorithm}
\caption{Tabular implementation of Balanced Q-learning\label{alg:Balanced-Q-learning_tabular}}

\begin{algorithmic}[1]

\STATE \textbf{Input: }

\STATE Step sizes \textbf{$\alpha$}, $\eta,$ exploration parameter
$\epsilon$, discount factor $\gamma$, maximum number of steps $N_{max}$

\STATE count $n=1$, $\beta_{1}=1$

\STATE Initialize $Q(s,a)$

\STATE Initialize $Q_{n}(s,a)$ and $Q_{n+1}(s,a)$ as $Q(s,a)$ 

\STATE Get initial state $s$

\STATE \textbf{Output: }Learned value function $Q_{N_{max}}$\textbf{ }

\WHILE {$n\leq N_{max}$} 

\STATE Use $\epsilon-$greedy strategy to choose action $a$; observe
$r,s'$

\STATE Compute TD error $\delta_{n}$ using $Q_{n}$ (Equation \ref{eq:delta_n_beta_0})$:$

\STATE Obtain $\beta'$ as per Equation \ref{eq:betavalue}:

$\beta'=\beta_{n}+\frac{\eta\delta_{n}}{\gamma[\underset{a'}{max\thinspace}Q_{n+1}(s',a')-\underset{a'}{min\thinspace}Q_{n+1}(s',a')]}$

\STATE Clip $\beta'$ to the range $[0,1]$

\STATE Compute target:
\[
Q_{T}=r(s,a)+\gamma[\beta'\underset{a'}{max\thinspace}Q_{n+1}(s',a')+(1-\beta')\underset{a'}{min\thinspace}Q_{n+1}(s',a')]
\]

\STATE Update $\beta_{n+1}$ as $\beta_{n+1}=(n\beta_{n}+\beta')/(n+1)$

\STATE Store $Q_{n}$ as the current estimate: $Q_{n}\leftarrow Q$

\STATE Update $Q$ values: $Q(s,a)\leftarrow Q(s,a)+\alpha[Q_{T}-Q(s,a)]$

\STATE Store $Q_{n+1}$ as the latest estimate: $Q_{n+1}\leftarrow Q$

\STATE Update state: $s\leftarrow s'$

\STATE Update count: $n\leftarrow n+1$

\ENDWHILE

\end{algorithmic} 
\end{algorithm}

\section{Additional Performance Plots:\label{sec:suppl_perf_plots}}

Figure \ref{fig:Performance-plots-1} shows the performance of Balanced
DQN/$Q-$learning in comparison with other baselines, for (a) tabular
navigation (b) CartPole-v0 and (c) Island Navigation. Figure \ref{fig:Performance-plots-1-1}
shows the performance in MinAtar environments (a) Freeway and (b)
Seaquest.

\begin{figure}
\begin{raggedright}
\begin{minipage}[t]{0.5\textwidth}%
\noindent \begin{center}
\includegraphics[width=1\columnwidth]{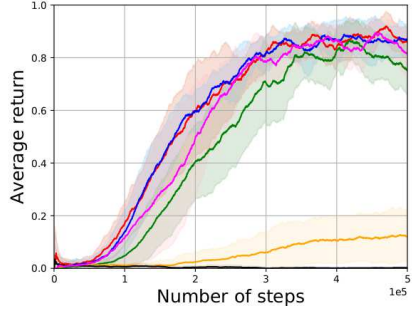} 
\par\end{center}
\begin{center}
(a) Tabular Navigation 
\par\end{center}%
\end{minipage}%
\begin{minipage}[t]{0.5\textwidth}%
\noindent \begin{center}
\includegraphics[width=1\columnwidth]{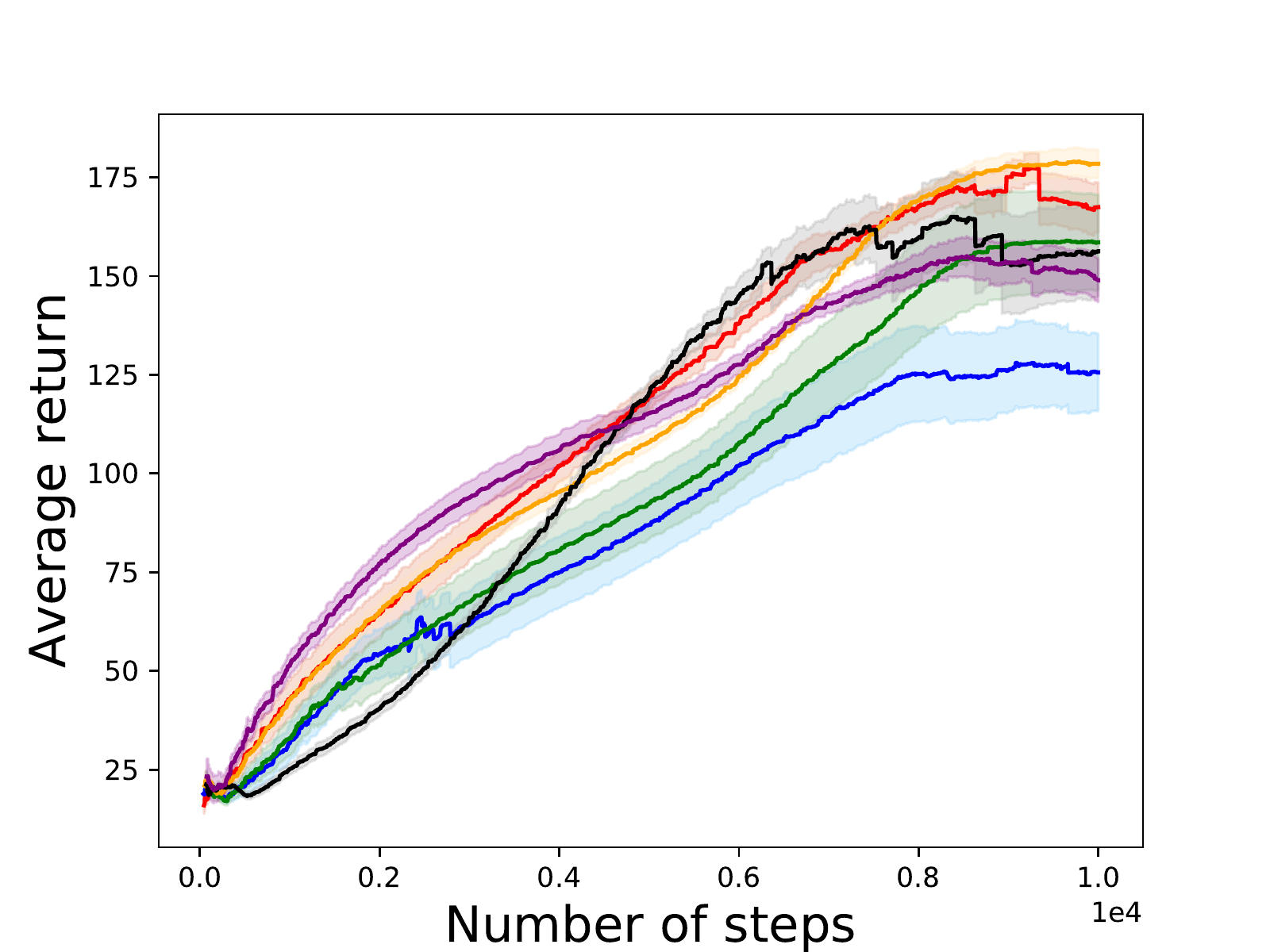} 
\par\end{center}
\begin{center}
(b) CartPole-v0 
\par\end{center}%
\end{minipage}%
\\
\begin{minipage}[t]{0.5\textwidth}%
\noindent \begin{center}
\includegraphics[width=1\columnwidth]{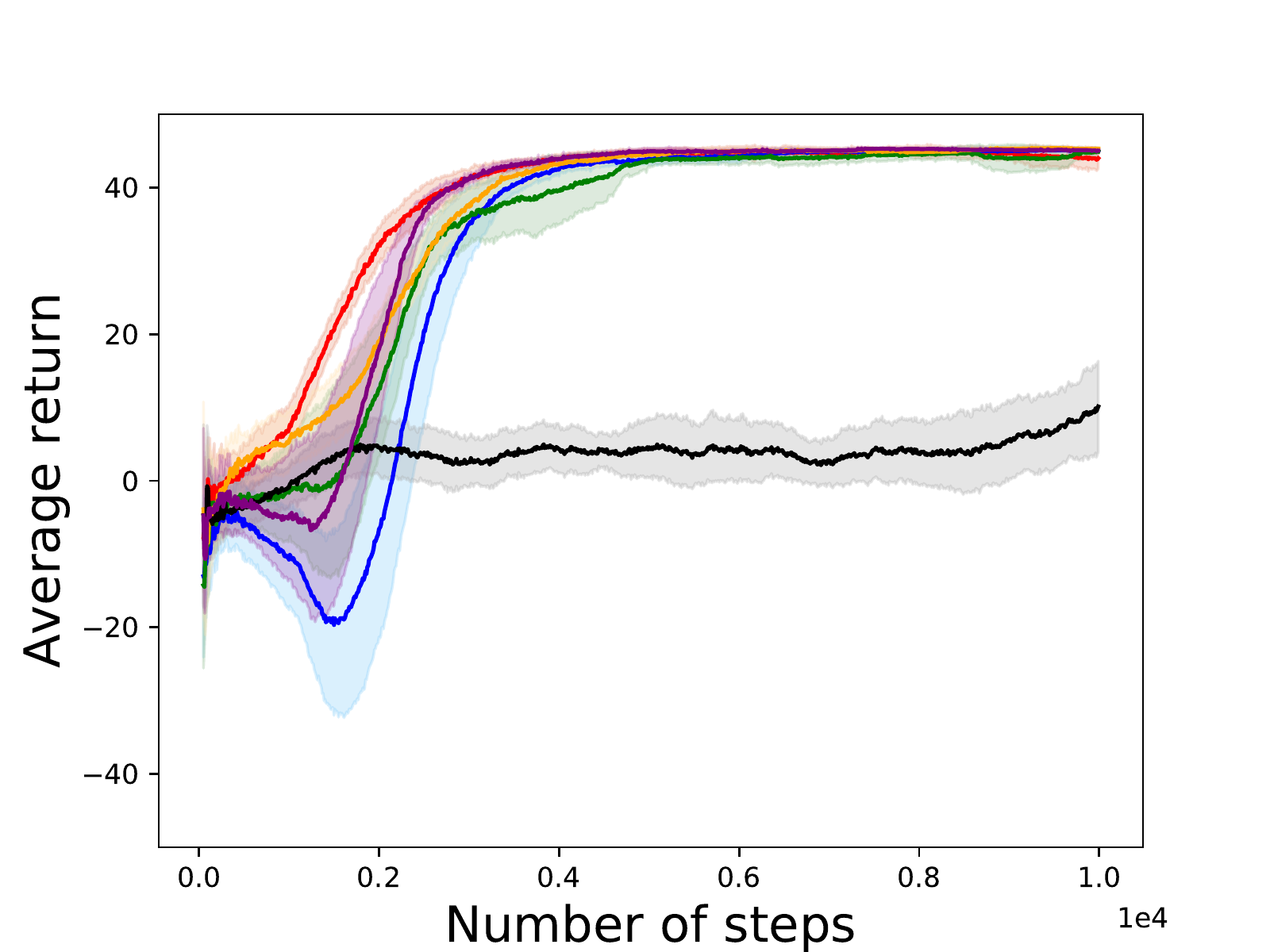} 
\par\end{center}
\begin{center}
(c) Island Navigation 
\par\end{center}%
\end{minipage}%
\noindent\begin{minipage}[t]{0.15\textwidth}%
\noindent \begin{center}
\includegraphics[width=2\columnwidth]{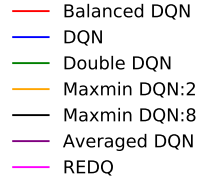}
\par\end{center}%
\end{minipage}
\par\end{raggedright}
\caption{Performance plots on tabular navigation, CartPole-v0 and Island Navigation
environments. The results are averaged over $15$ runs and the shaded
regions represent standard deviation. \label{fig:Performance-plots-1}}
\end{figure}

\begin{figure}
\begin{raggedright}
\begin{minipage}[t]{0.45\textwidth}%
\noindent \begin{center}
\includegraphics[width=1\columnwidth]{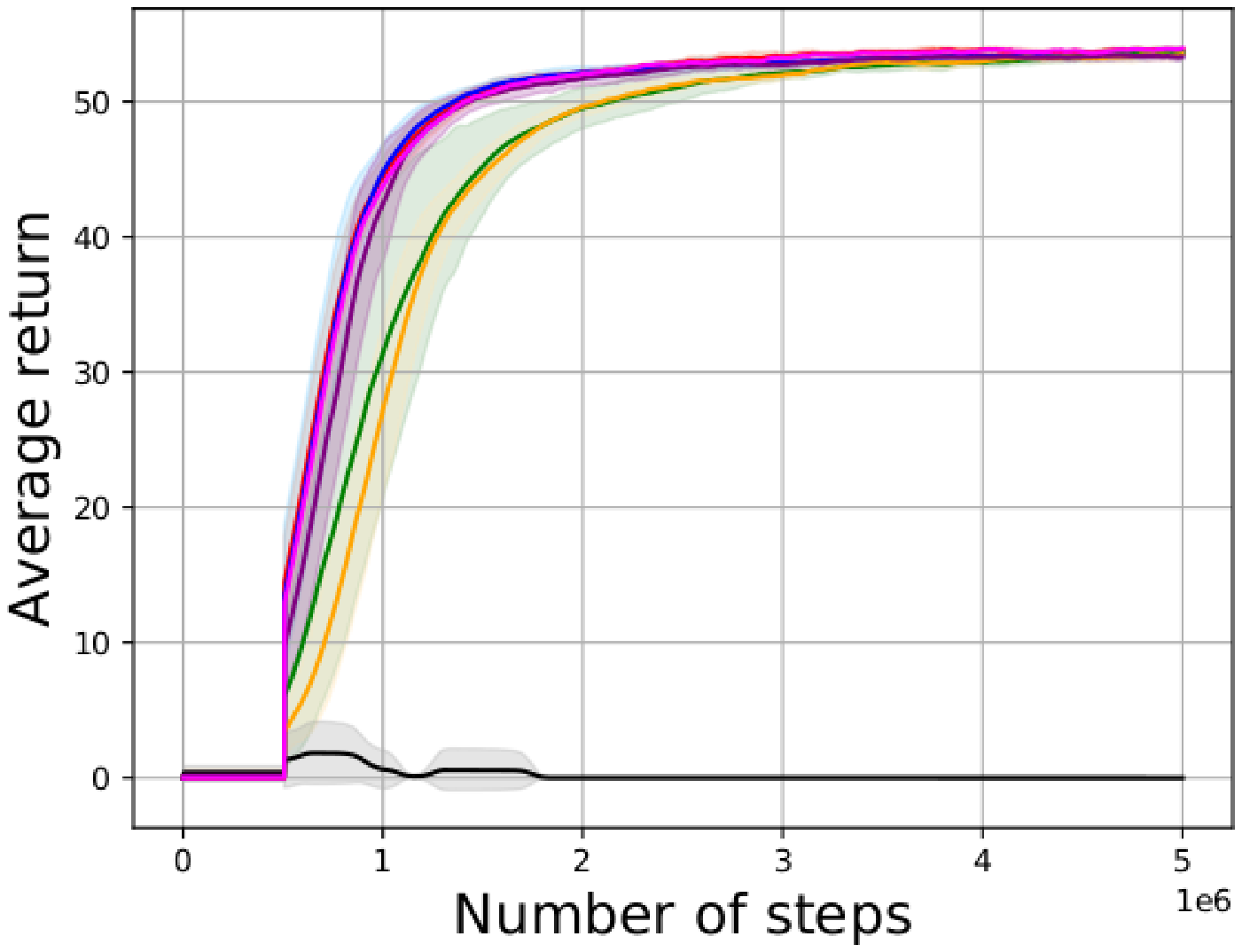} 
\par\end{center}
\begin{center}
(a) Freeway 
\par\end{center}%
\end{minipage}%
\begin{minipage}[t]{0.45\textwidth}%
\noindent \begin{center}
\includegraphics[width=1\columnwidth]{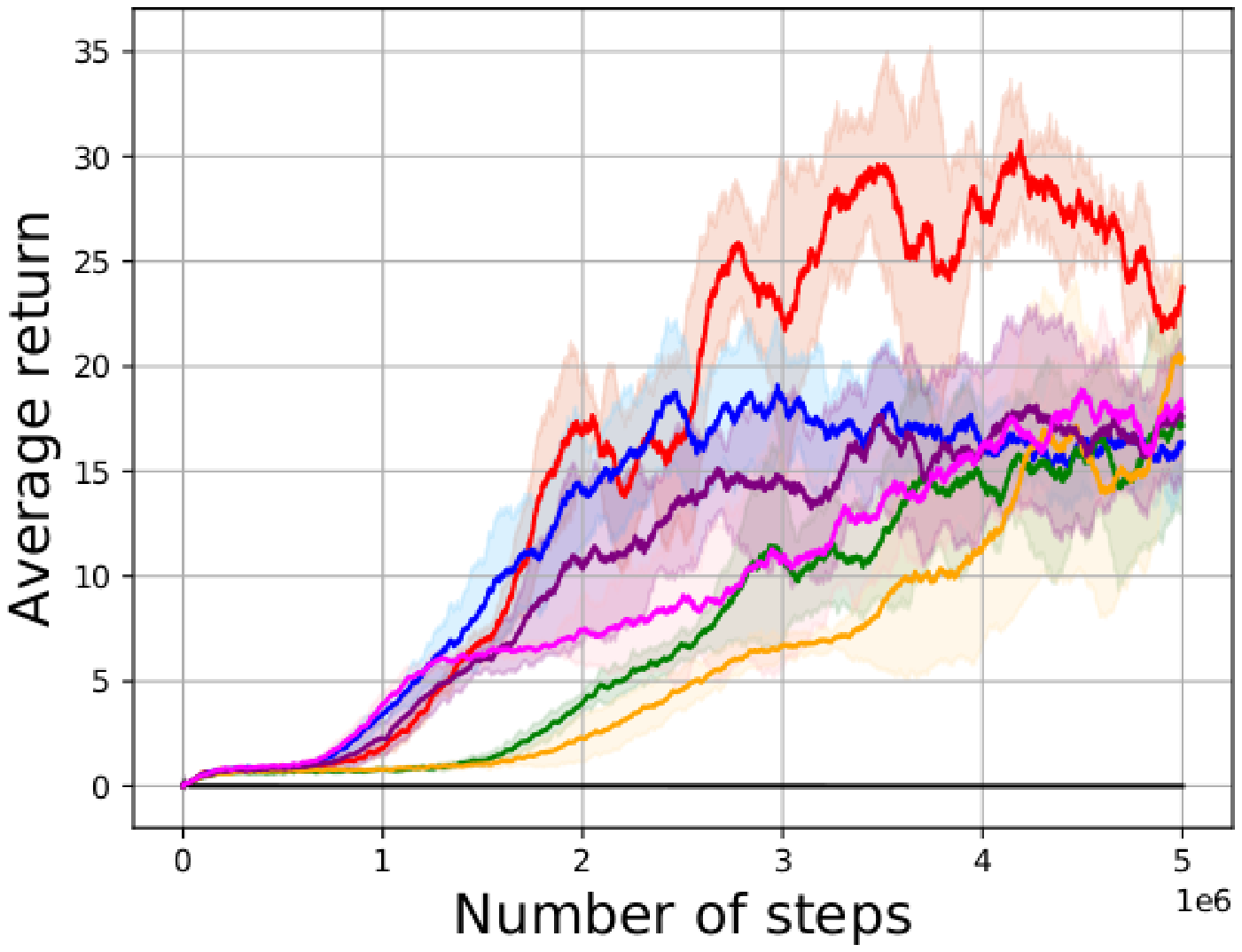} 
\par\end{center}
\begin{center}
(b) Seaquest
\par\end{center}%
\end{minipage}%
\noindent\begin{minipage}[t]{0.1\textwidth}%
\noindent \begin{center}
\includegraphics[width=2\columnwidth]{image_gen/etalegend_DQN}
\par\end{center}%
\end{minipage}
\par\end{raggedright}
\caption{Performance plots on the Seaquest and Freeway MinAtar environments.
The results are averaged over $10$ runs and the shaded regions represent
standard deviation. \label{fig:Performance-plots-1-1}}
\end{figure}

\section{Sensitivity of $\eta$\label{sec:Ablation}}

To show the effect of choosing the hyperparameter $\eta$, we repeat
the experiments with $\eta$ values of $0.2,0.4,0.6,0.8$ and $1$
in each environment. In each case, the ratio of the total average
rewards obtained are plotted relative to the case of $\eta=0$ (which
simply corresponds to the case of DQN). As depicted in Figure \ref{fig:etavar},
the agent produces favorable performances for low values of $\eta$
($\eta>0)$. The performance is unfavorably affected when $\eta$
is set to large values (as is the case when large step sizes are used in general), as this results in large corrections in the
TD error. However, as depicted in Figure \ref{fig:etavar},the algorithm achieves a good performance across a wide range of $\eta$ values.

\begin{figure}
\centering{}\includegraphics[width=0.5\columnwidth]{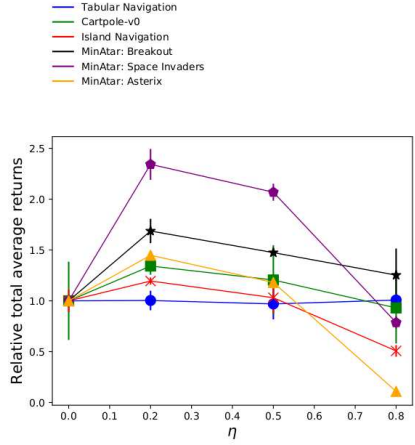}\caption{Performance relative to $\eta=0$ (DQN) for different values of $\eta$
in different environments over $3$ runs. The error bars correspond
to one standard deviation.}
\label{fig:etavar} 
\end{figure}

\section{Fixed $\beta$\label{sec:fixedbeta}}

We also examine the effect of fixed values of $\beta$ in contrast
to determining $\beta$ online using balanced $Q-$learning. As depicted
in Figures \ref{fig:Performance_fixed} and \ref{fig:fixed_simple},
the learning performance is highly sensitive to the specific value
of $\beta$ chosen. However, Balanced DQN automatically updates $\beta$
online to the appropriate value.

\begin{figure}
\begin{raggedright}
\begin{minipage}[t]{0.5\textwidth}%
\noindent \begin{center}
\includegraphics[width=1\columnwidth]{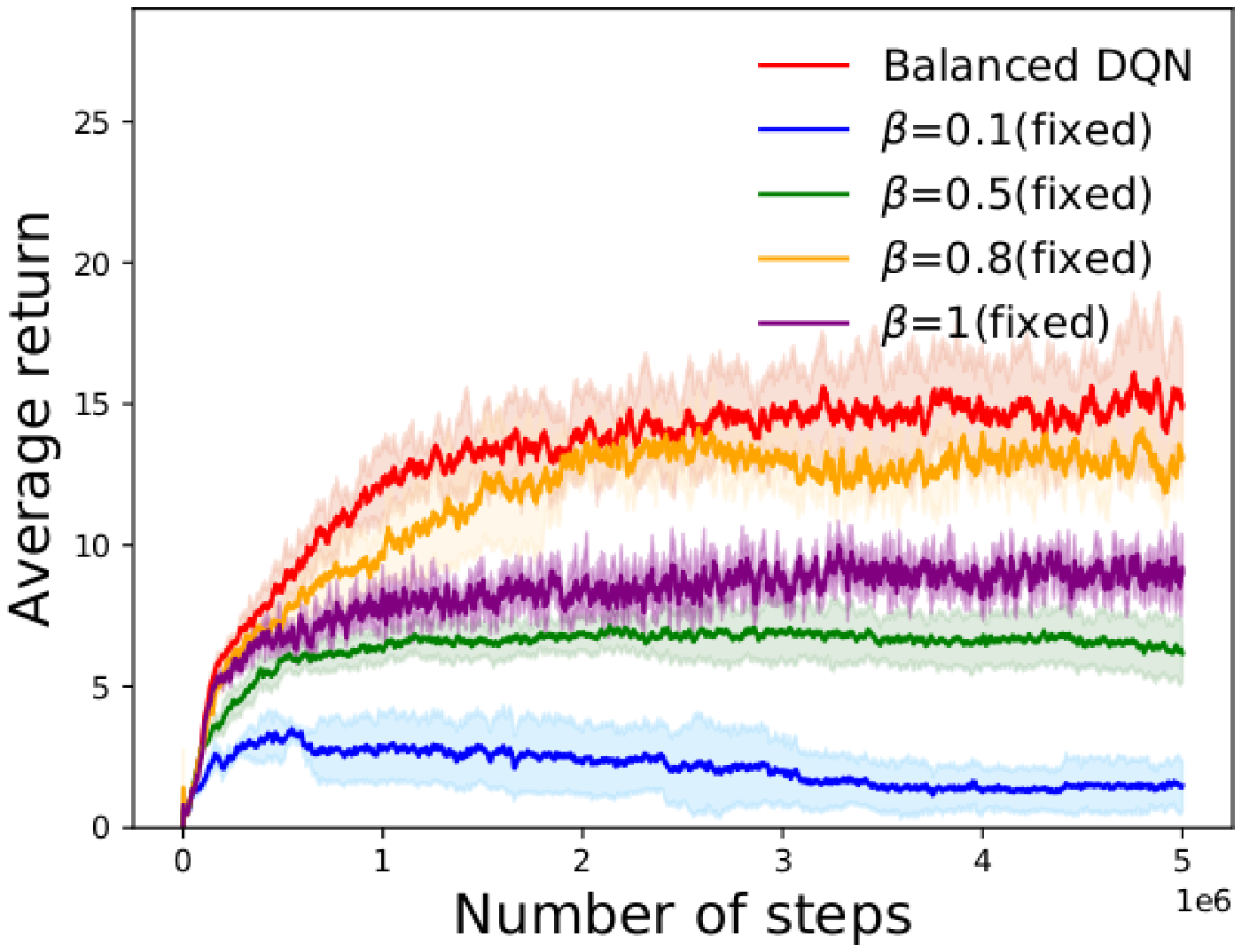} 
\par\end{center}
\begin{center}
(a) Breakout 
\par\end{center}%
\end{minipage}%
\begin{minipage}[t]{0.5\textwidth}%
\noindent \begin{center}
\includegraphics[width=1\columnwidth]{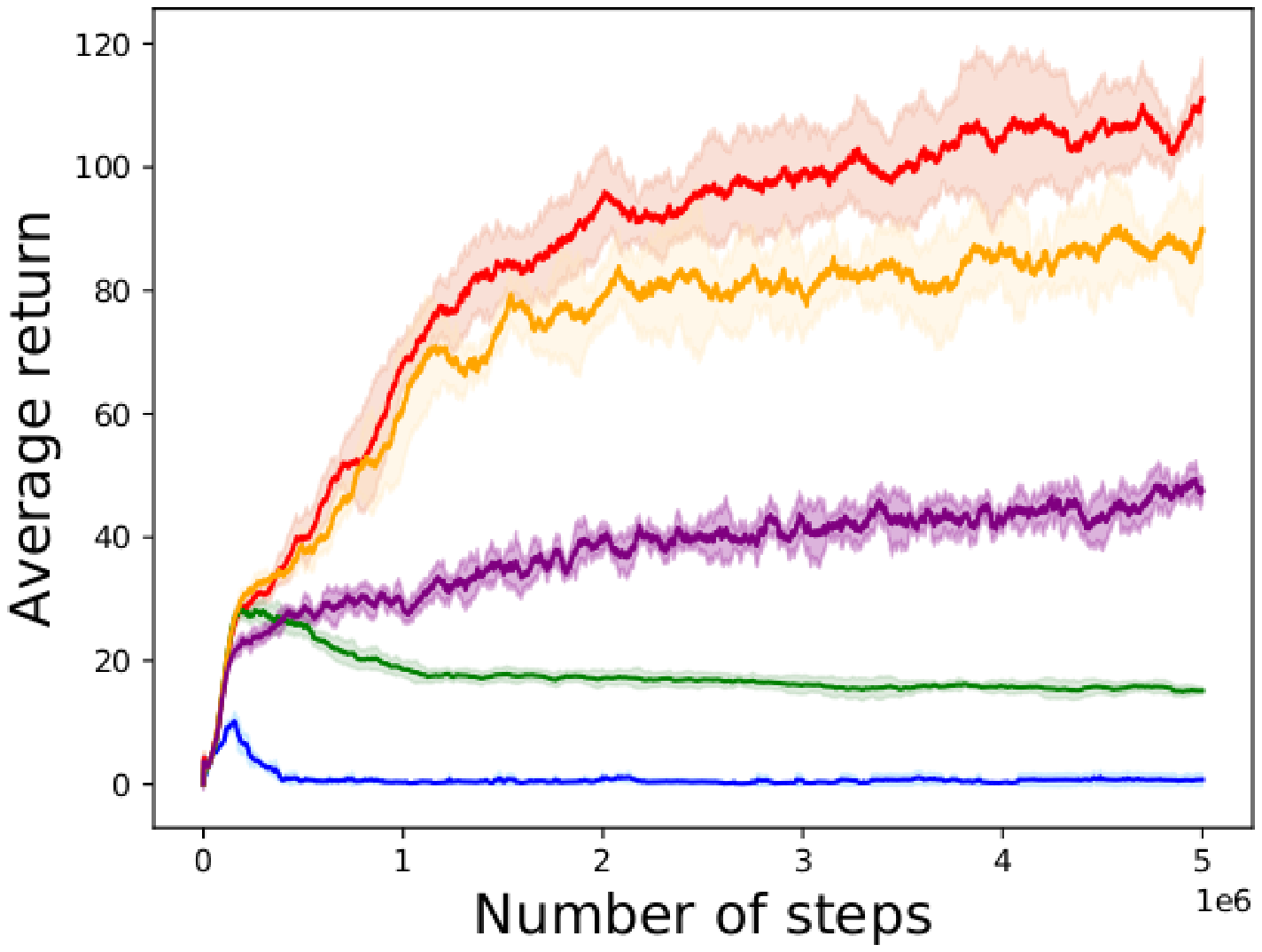} 
\par\end{center}
\begin{center}
(b) Space Invaders 
\par\end{center}%
\end{minipage}%
\\
\begin{minipage}[t]{0.5\textwidth}%
\noindent \begin{center}
\includegraphics[width=1\columnwidth]{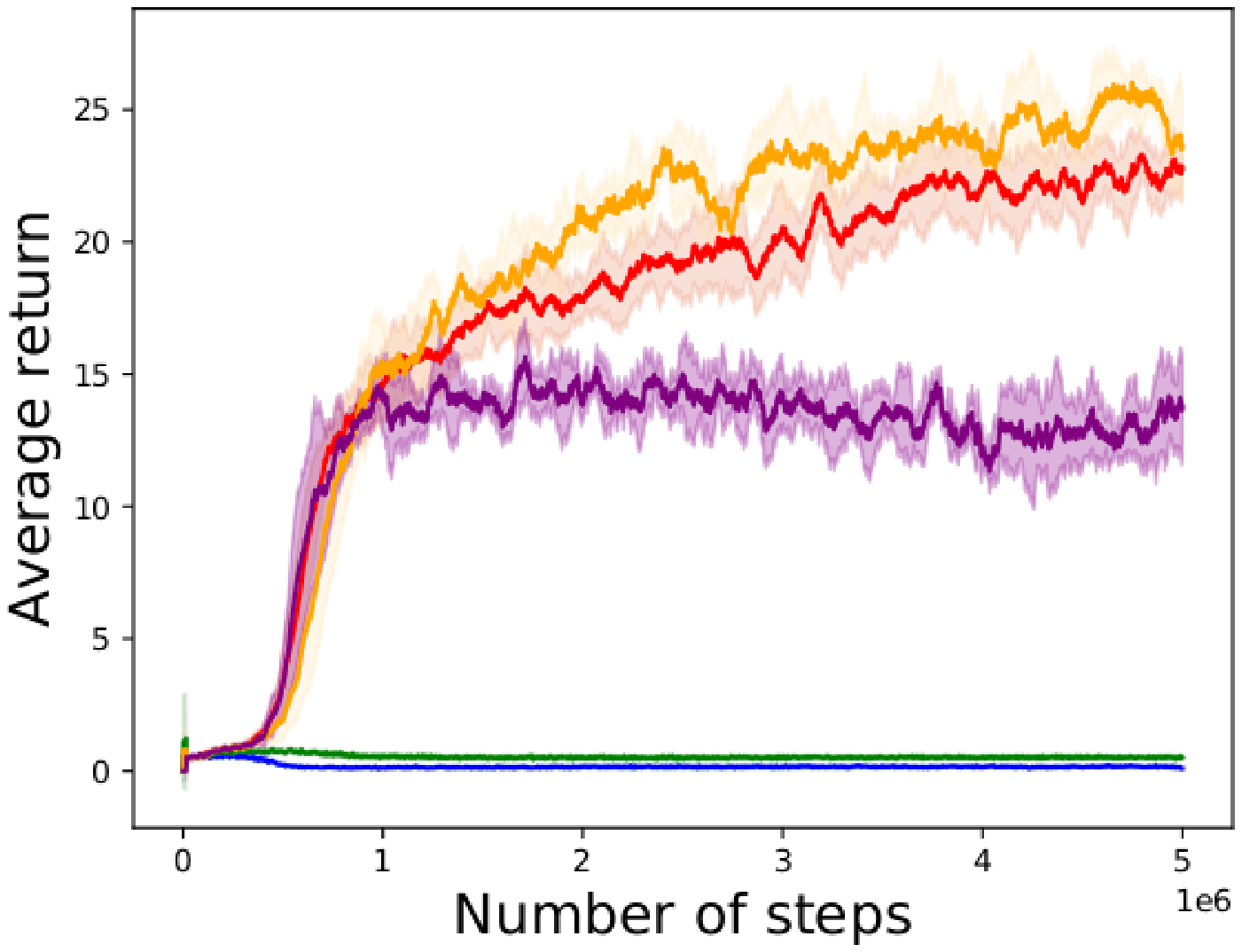} 
\par\end{center}
\begin{center}
(c) Asterix 
\par\end{center}%
\end{minipage}
\par\end{raggedright}
\caption{Performance plots on the MinAtar environments with fixed $\beta$.
The results are averaged over $10$ runs, and the shaded regions represent
one standard deviation. \label{fig:Performance_fixed}}
\end{figure}

\begin{figure}[H]
\begin{raggedright}
\begin{minipage}[t]{0.5\textwidth}%
\noindent \begin{center}
\includegraphics[width=0.5\columnwidth]{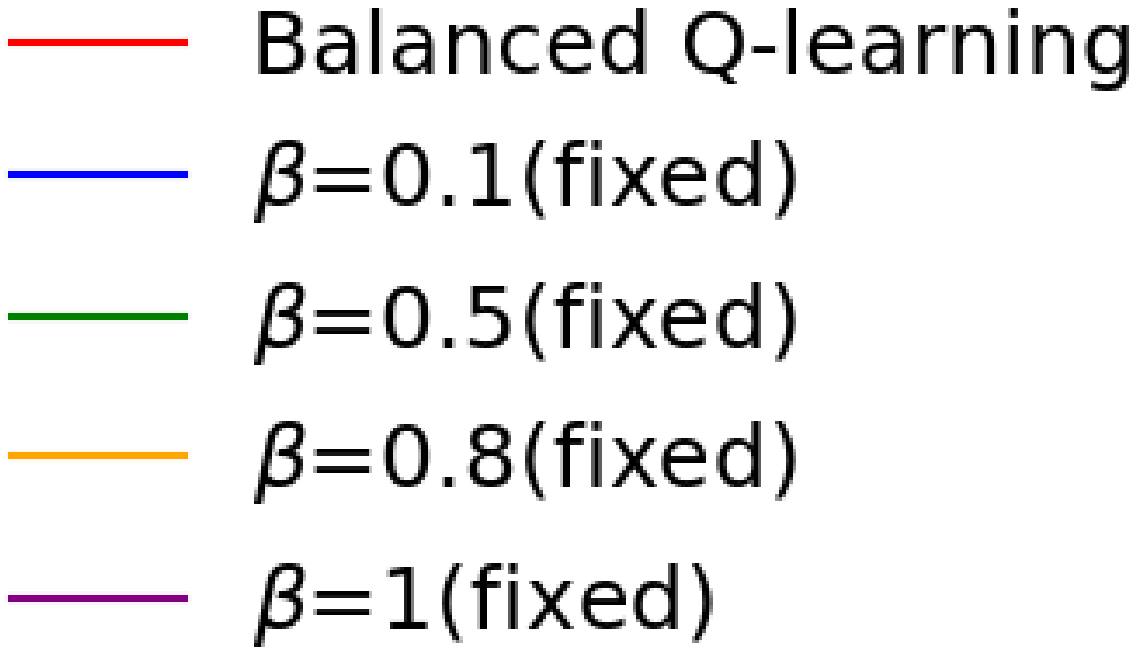} 
\par\end{center}%
\end{minipage}
\par\end{raggedright}
\begin{raggedright}
\begin{minipage}[t]{0.5\textwidth}%
\noindent \begin{center}
\includegraphics[width=1\columnwidth]{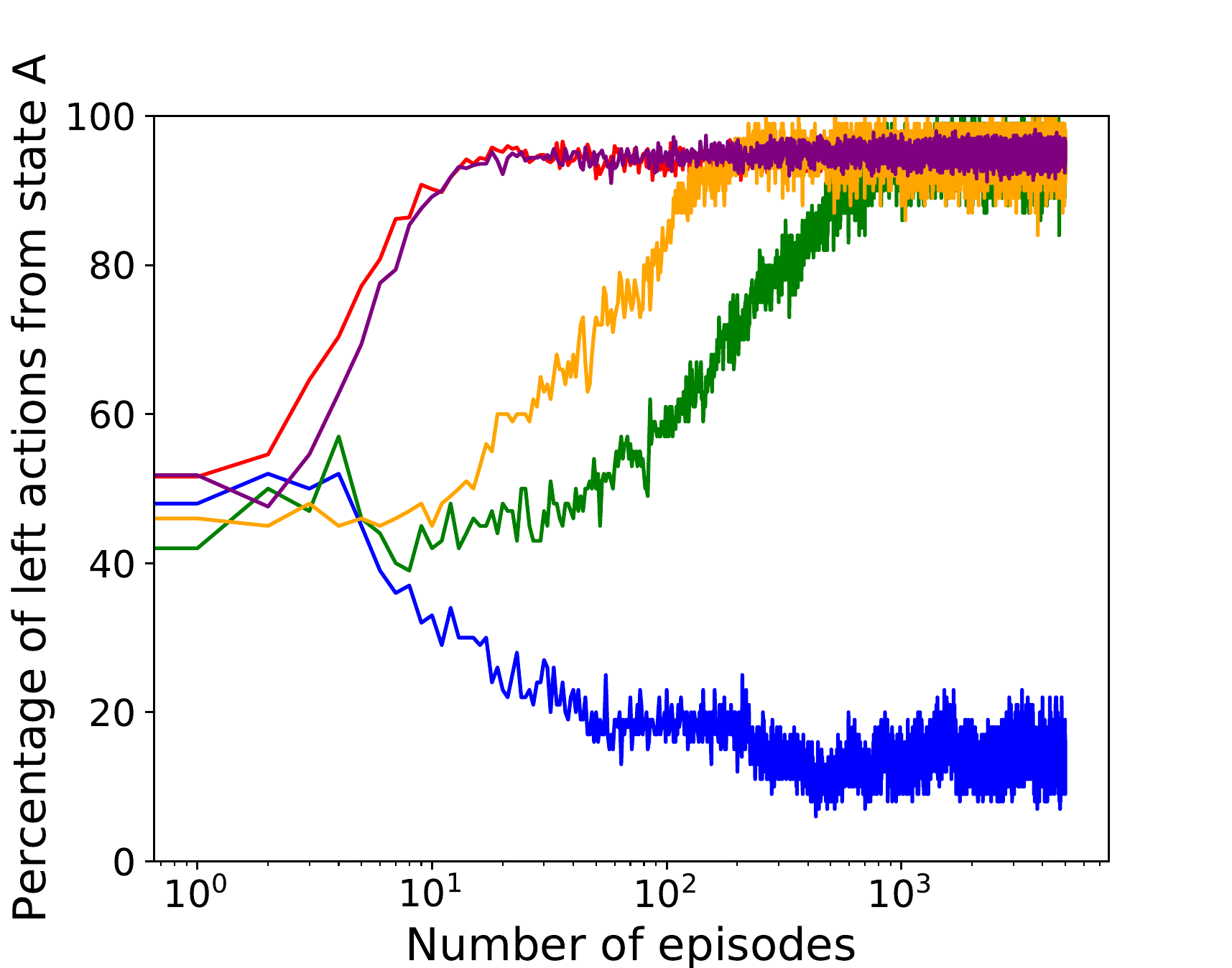} 
\par\end{center}
\begin{center}
(a) Mean reward $\mu=+0.1$. Higher values are better.
\par\end{center}%
\end{minipage}%
\begin{minipage}[t]{0.5\textwidth}%
\noindent \begin{center}
\includegraphics[width=1\columnwidth]{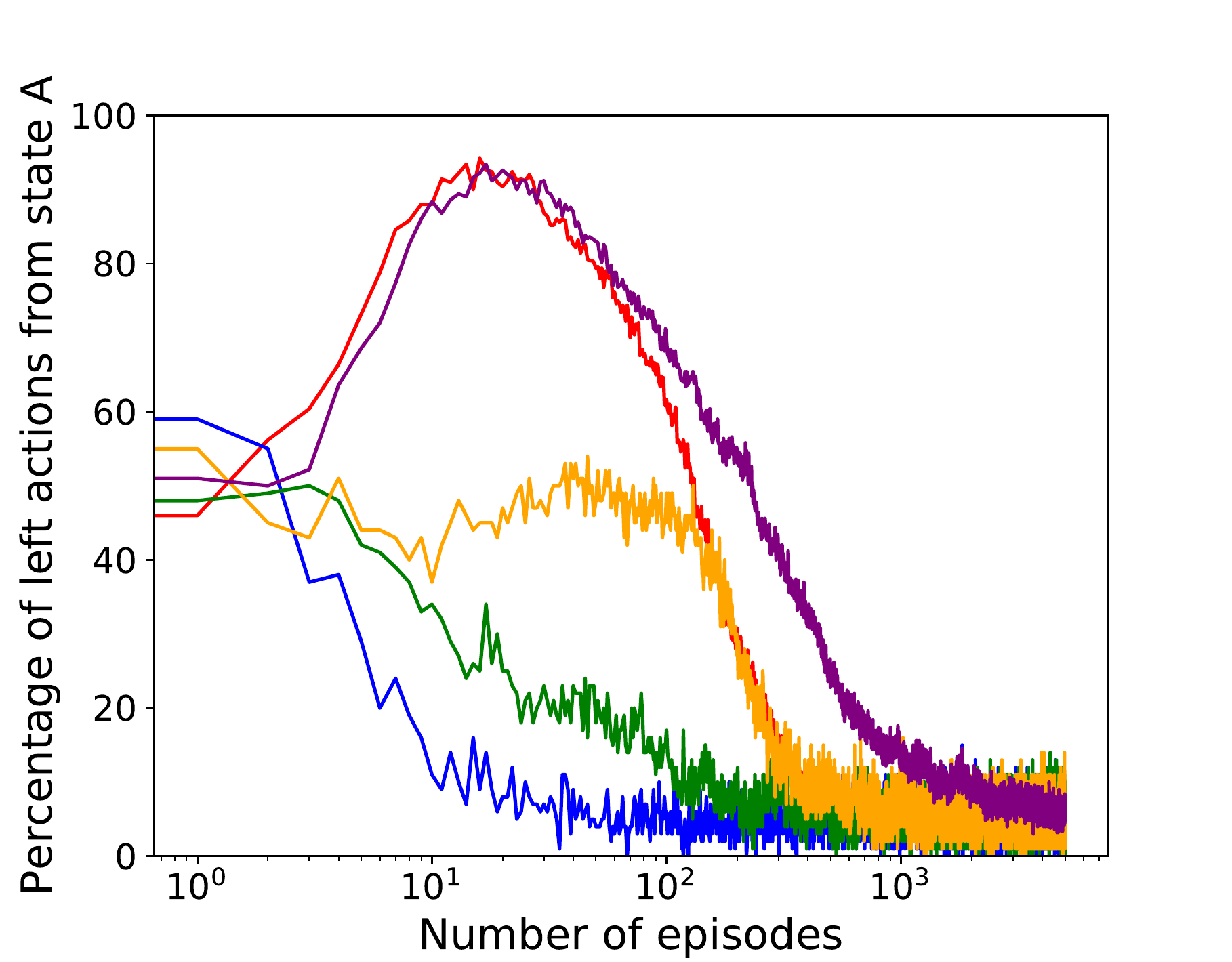} 
\par\end{center}
\begin{center}
(a) Mean reward $\mu=-0.1$. Lower values are better. 
\par\end{center}%
\end{minipage}
\par\end{raggedright}
\caption{Performance in the simple MDP environment (Figure \ref{fig:lineworld})
with fixed $\beta$. The results are averaged over $100$ runs.\label{fig:fixed_simple}}
\end{figure}

\end{document}